\documentclass[pdflatex,sn-mathphys-ay]{sn-jnl}%

\usepackage{graphicx}%
\usepackage{multirow}%
\usepackage{amsmath,amssymb,amsfonts}%
\usepackage{amsthm}%
\usepackage{mathrsfs}%
\usepackage[title]{appendix}%
\usepackage{xcolor}%
\usepackage{textcomp}%
\usepackage{manyfoot}%
\usepackage{booktabs}%
\usepackage{algorithm}%
\usepackage{algorithmicx}%
\usepackage{algpseudocode}%
\usepackage{listings}%

\usepackage{afterpage}
\usepackage{amssymb}
\usepackage{amsmath}
\usepackage{array}
\usepackage{booktabs}
\usepackage{floatpag}
\usepackage{pgfplots}
\usepackage{pdflscape}
\usepackage{tikz}
\usepackage{siunitx}

\theoremstyle{thmstyleone}%
\newtheorem{theorem}{Theorem}%  meant for continuous numbers

\theoremstyle{thmstyletwo}%

\theoremstyle{thmstylethree}%
\newtheorem{lemma}[theorem]{Lemma}

\def\ba{\begin{array}}
\def\ea{\end{array}}
\def\beq{\begin{equation}}
\def\eeq{\end{equation}}
\def\beqnn{\begin{equation*}}
\def\eeqnn{\end{equation*}}
\def\beann{\begin{eqnarray*} }
\def\eeann{\end{eqnarray*}}
\def\bea{\begin{eqnarray}}
\def\eea{\end{eqnarray}}
\def\bmi{\begin{minipage}[t]}
\def\emi{\end{minipage}}

\newcommand{\SE}{\mathrm{SE}}
\newcommand{\MSE}{\mathrm{MSE}}
\newcommand{\errG}{\mbox{infeas}_{G}}

\newcommand{\D}{Z}

\newcommand{\grad}{\nabla}
\newcommand{\tildeSE}{\widetilde{\mbox{SE}}}
\newcommand{\sign}{\mbox{sign}}
\newcommand{\assign}{\leftarrow}

\let\orcidlogo\relax
\usepackage{orcidlink}
\usepackage{cleveref}
\pgfplotsset{compat=1.18}
\newgeometry{margin=1in}

\begin{document}

\title[Article Title]{On solving symmetric multi-type orthogonal non-negative matrix tri-factorization problem}

\author[1]{\fnm{Rok} \sur{Hribar}\orcidlink{0000-0003-3894-5459}}\email{rok.hribar@ijs.si}%
\author[1]{\fnm{Gregor} \sur{Papa}\orcidlink{0000-0002-0623-0865}}\email{gregor.papa@ijs.si}%
\author[2,3]{\fnm{Janez} \sur{Povh}\orcidlink{0000-0002-9856-1476}}\email{janez.povh@rudolfovo.eu}%
\author*[4]{\fnm{Andrej} \sur{Kastrin}\orcidlink{0000-0002-3495-0165}}\email{andrej.kastrin@mf.uni-lj.si}%

\affil[1]{\orgdiv{Computer Systems Department}, \orgname{Jožef Stefan Institute}, \orgaddress{\street{Jamova cesta 39}, \city{Ljubljana}, \postcode{1000}, \country{Slovenia}}}%
\affil[2]{\orgdiv{Laboratory for Engineering Design}, \orgname{Faculty of Mechanical Engineering, University of Ljubljana}, \orgaddress{\street{Aškerčeva cesta 6}, \city{Ljubljana}, \postcode{1000}, \country{Slovenia}}}%
\affil[3]{\orgname{Rudolfovo -- Science and Technology Centre Novo mesto}, \orgaddress{\street{Podbreznik 15}, \city{Novo mesto}, \postcode{8000}, \country{Slovenia}}}%
\affil[4]{\orgdiv{Institute of Biostatistics and Medical Informatics}, \orgname{Faculty of Medicine, University of Ljubljana}, \orgaddress{\street{Vrazov trg 2}, \city{Ljubljana}, \postcode{1000}, \country{Slovenia}}}%

\abstract{We study the symmetric multi-type orthogonal non-negative matrix tri-factorization problem, where several symmetric non-negative matrices are simultaneously approximated by factors of the form $GS_iG^\top$, with a shared non-negative and orthogonal factor $G$. This model is motivated by clustering and network analysis, where non-negativity improves interpretability and orthogonality gives a natural assignment-type structure to the latent factor. Since the resulting optimization problem is highly non-convex, we develop two heuristic algorithms for computing high-quality local solutions. The first one is a fixed point method derived from the Karush--Kuhn--Tucker conditions after adding a penalty term for the orthogonality constraint. The second one is a three-stage ADAM-based method that combines non-negativity-preserving optimization, orthogonalization, and restricted ADAM refinement on the feasible set. We evaluate both methods on synthetic data, including noisy instances, and on citation network benchmarks. The synthetic experiments show that both algorithms recover factorizations close to the optimum and remain stable under noise. On real networks, the learned embeddings are competitive with or better than standard baselines such as SVD, node2vec, and classical link prediction heuristics in link prediction, node clustering, and node classification tasks.}

\keywords{Non-negative matrix factorization,
          Orthogonality conditions,
          Fixed point method,
          Multiplicative update algorithm,
          Adaptive moment estimation,
          Network embedding}

%%\pacs[JEL Classification]{D8, H51}

%%\pacs[MSC Classification]{35A01, 65L10, 65L12, 65L20, 65L70}

\maketitle

\section{Introduction}

\subsection{Motivation}

During the last decade, we have witnessed the rise of methods to reduce data dimensionality and complexity. Among them, non-negative matrix factorization (NMF) and non-negative matrix tri-factorization \eqref{eqn:NMTF} have received extraordinary attention because of their ability to
extract sparse and easily interpretable factors and  predict new associations overlooked by other methods.

The main idea of \ref{eqn:NMTF} is to factorize a given input data matrix $R$ into the product of three non-negative matrices, $G_1$, $S$, and $G_2$, with much lower dimensions. These are used for co-clustering of the concepts underlying the data and to generate the predictions (recommendations, hypotheses) for new associations between the concepts. The mathematical core of the basic variant of this method can be formulated as an optimization problem
\begin{align}
 \label{eqn:NMTF} \tag{NMTF}
  \begin{split}
  \min~  \{\|R-G_1SG_2^T\|^2 \colon G_1,G_2,S\ge 0\}
  \end{split}
\end{align}

The non-negativity constraint in \ref{eqn:NMTF} is crucial. It makes the problem harder, but it also drastically improves the usability and interpretability of the results, compared to other factorization methods, like spectral or singular value decomposition (SVD). This is further improved if we add orthogonality constraints for the outer factors $G_1$ and $G_2$, since these matrices are models for the assignment matrices, which means that the $i$-th row of $G_1$ tells us into which group of data the $i$-th data instance should be assigned. This information is encoded as the non-zero value (usually the value 1) in the column that represents the group into which the data point is assigned. Therefore, such assignment matrix $G_1$ should satisfy the constraint $G_1^TG_1=I$ and the same should also hold for $G_2$.

When the input data matrix $R$ is symmetric, which happens if we cluster the network data into homogeneous groups (communities), then we may assume that $G_1=G_2$, so we consider the symmetric non-negative matrix factorization. 
\begin{align}
 \label{eqn:SNMTF} \tag{SNMTF}
  \begin{split}
  \min~  \{\|R-GSG^T\|^2 \colon G,S\ge 0\}
  \end{split}
\end{align}

If we consider different data matrices $R_1,\ldots,R_N$, for $N>1$, above the same data set, like having different biological networks above the same set of genes, and try to cluster the underlying dataset into groups such that the best overall fit to the matrices $R_i$ is achieved, then we naturally come to symmetric multi-type orthogonal non-negative matrix tri-factorization (SONMTF) problem, introduced in the following subsection.

\subsection{Problem formulation}

The central problem that we study in this paper is \emph{symmetric multi-type orthogonal non-negative matrix tri-factorization (SONMTF)} problem. The input for this problem is an $N$-tuple of symmetric non-negative matrices $(R_1,\ldots,R_N)$, which we want to factorize simultaneously by solving:

\begin{equation}
 \label{eqn:SONMTF} 
\tag{SONMTF}
\min\Big\{ \sum_{i=1}^N \|R_{i}-GS_{i}G^\top\|^2 : G\ge 0,~S_{i}\ge 0,~G^\top G=I,~S_i=S_i^\top\Big\}.
\end{equation}
The norm in \ref{eqn:SONMTF} is the Frobenius norm.
This is a non-convex optimization problem, where the objective function is non-convex polynomial of order 6 and the feasible set is highly non-convex due to orthogonality constraint. Therefore, we cannot expect to have efficient methods to solve this problem to optimality. But this is actually the situation for all well-known clustering (community detection) models -- they are computationally intractable, hence in practice we need to use different heuristic methods to find good feasible solutions. For practical purposes, related to data science, this is usually acceptable. 
Therefore, the main goal of this paper is development of new algorithms to solve \ref{eqn:SONMTF} approximately (to find local optimum), and demonstration of their performance on synthetic and real data. 

\subsection{Related work}

The majority of the results about non-negative matrix factorization are related to the basic two-factor formulation (NMF). The recently published book by \citet{gillis2020nonnegative} contains a great overview of up-to-date theoretical and algorithmic advances pertained to this variant of the problem. The non-negative matrix tri-factorization problem \eqref{eqn:NMTF}, which is formulated above, is theoretically less understood, but its applications in machine learning are significant and well justified. It was used to fuse experimental network biological data to uncover new pan-cancer genes and to identify the most rewired genes in cancer \citep{GMDP:16a,icell2019}. New algorithms and codes for several simplified variants of NMF and \eqref{eqn:NMTF} were presented by \citet{icell2019}, \citet{asadi2020block}, and \citet{hribar2020algorithms}.

\subsubsection{Orthogonal NMTF for clustering}

A systematic analysis of tri-factor NMF was provided by \citet{DingLiPengPark2006}, with a focus on the orthogonality constraint since it leads to rigorous clustering interpretation. The authors also provided some new rules for updating the factors and proved the convergence of the algorithms. In addition, they introduced a new approach on the quality evaluation of clustering on words by using class aggregate distribution and multi-peak distribution. Multiplicative updates for orthogonal \eqref{eqn:NMTF} were proposed by \citet{Yoo2010}, where the decomposition was pursued with orthogonality constraints on the Stiefel manifold. Experiments demonstrated efficiency for document clustering and usefulness in revealing polysemous words via co-clustering words and documents. Three new orthogonal \eqref{eqn:NMTF} methods, with different error distributions---normal, Poisson, and compound Poisson---were studied by \citet{Abe2018}. They developed a $k$-means based algorithm without a multiplicative updating algorithm, and showed that the assumption of the error distribution leads to robust estimation against extremely large positive values. A correntropy-based orthogonal \eqref{eqn:NMTF} algorithm (CNMTF) was developed by \cite{peng2020robust}, which proved robust to noisy data contaminated by non-Gaussian noise and outliers, outperforming state-of-the-art methods in clustering real-world image and text datasets. The authors further elaborated this, proposing a robust \eqref{eqn:NMTF} algorithm that adopts correntropy as the similarity measure and considers double graph regularization and orthogonality conditions. A robust \eqref{eqn:NMTF} with dual hyper-graph regularization (RDHNMTF) was recently proposed by \cite{yu2025robust}, using the $\ell_{2,1}$-norm to enhance robustness to noise and outliers and introducing a dual hyper-graph to capture higher-order geometric information in both sample and feature spaces. Extensive experiments on image and text clustering benchmarks confirmed the superiority of RDHNMTF over competing methods.

\subsubsection{Symmetric and multi-type NMTF}

A symmetric \eqref{eqn:NMTF} framework for simultaneously clustering multi-type relational data was presented by \citet{WangHuangDing2011}. The proposed approach employs \eqref{eqn:NMTF} to simultaneously cluster different types of data through their inter-type relationships, and incorporates the intra-type information through manifold regularization. Similarly, \citet{Liu2015RobustMC} proposed an $L_1$-norm symmetric \eqref{eqn:NMTF} framework to cluster multi-type relational data by utilizing their inter-relatedness. Because of the $L_1$-norm distances, the approach is robust against the noise and outliers that occur in multi-relational data. In a subsequent study, \citet{LiuWang2018} proposed an $L_1$-norm symmetric \eqref{eqn:NMTF} method to solve the high-order co-clustering problem. The authors derived the algorithm using the alternating direction method of multipliers, to overcome the problem of the conventional auxiliary function approach, which does not work under orthogonal constraints and the symmetric $L_1$-norm formulation. A block inertial Bregman proximal algorithm (for minimizing the sum of a block relatively smooth function) was applied to the SNMTF by \citet{AhookhoshEtAl2021}. They also proposed some kernel functions for the SNMTF and provided closed-form solutions. A relaxation of \eqref{eqn:SONMTF} (orthogonality of $G$ and non-negativity of $S_i$ were omitted) was used to construct a computational prototype called iCell \citep{icell2019}, which integrated three omics, tissue-specific molecular interaction network types. Data about four cancers and the corresponding tissue controls was used to identify the most rewired genes and to uncover pan-cancer genes. \citet{hribar2020algorithms} developed four different algorithms to solve the symmetric multi-type \eqref{eqn:NMTF} problem, which is essentially \eqref{eqn:SONMTF} with the orthogonality constraint $G^\top G=I$ omitted.

\subsubsection{Symmetric NMF for community detection}

Highly accurate community detectors, based on graph-regularized symmetric NMF, were developed by \cite{Luo_et_al_2021}. This approach gave a significant accuracy gain over the state-of-the-art. Solving the data-mining clustering problems by the symmetric non-negative matrix factorization of the similarity matrix was studied by \cite{FavatiEtAl2019}. The authors presented variants with and without a prescribed number $k$ of clusters, and proposed a heuristic approach to improve the multi-start strategy. A joint orthogonal symmetric NMF model was proposed by \cite{kong2024joint} for community detection in complex attribute networks, unifying attribute homogeneity and topology similarity in a single framework. An orthogonality constraint is imposed on the factor matrix to improve the accuracy of node-to-community assignments, and a novel multi-order graph regularization together with an eigenvector-centrality-based enhancement strategy is established to capture comprehensive adjacency information. Most recently, a robust low-rank tensor constrained orthogonal symmetric \eqref{eqn:NMTF} method (RTOSNMF) was introduced by \cite{zhou2025robust} for multi-layer network community detection. The method separates noise from raw adjacency matrices using an $\ell_{2,1}$-norm penalty and exploits a nuclear-norm constraint to preserve the low-rank inter-layer structure of the adjacency tensor, yielding state-of-the-art performance on eight benchmark datasets.

\subsubsection{Semi-supervised and document co-clustering}

\citet{HuifangMa2010} proposed a novel semi-supervised document co-clustering model OSS-NMF via orthogonal \eqref{eqn:NMTF}. The approach incorporates prior knowledge on the document side and word side, considering dual constraints between data points (e.g., documents) and features (e.g., words), to aid the construction of new word-category and document-cluster matrices. The reported experiments show remarkable performance improvements for document clustering. An application of \eqref{eqn:NMTF} for text data co-clustering by simultaneously partitioning documents and words was done by \cite{SalahAilemNadif2018}. The authors proposed a model that maps frequently co-occurring words to approximately the same direction in the latent space to reflect the relationships among them, and derived a scalable alternating optimization algorithm with guaranteed convergence. \citet{Chen2022} introduced a parallel and scalable \eqref{eqn:NMTF}-based algorithm for text data co-clustering. They proposed solving the Lagrange dual objective function in parallel through an efficient distributed implementation. The approach is validated through five benchmark corpora for effectiveness, efficiency, and scalability.

\subsubsection{Scalable algorithms and alternative optimization strategies}

The state-of-the-art approach to solving these problems relies on the Fixed Point Method (FPM) with different variants of the multiplicative update rules \citep{cichocki2009nonnegative,vcopar2017scalable,vcopar2019fast,icell2019,peng2020robust}. Apart from FPM, some other methods from the area of mathematical optimization have been applied to \eqref{eqn:NMTF}, such as the Projected Gradient Method \citep{lin2007projected}, the Least-Squares Algorithm with the active-set method \citep{kim2008nonnegative}, and the Alternating Direction Method of Multipliers \citep[ADMM,][]{hajinezhad2016nonnegative}. A blockwise approach for latent factor learning in matrix tri-factorization was developed by \cite{vcopar2017scalable}, demonstrating almost linear scaling on multi-processor and multi-GPU architectures on large biomedical datasets. Extensive computational studies show that, alongside FPM, the Adaptive Moment Estimation Method (ADAM) -- initially developed for training deep neural networks -- is also a very efficient method for \eqref{eqn:NMTF}, especially when orthogonality constraints are imposed on factors $G_1, G_2$ and high-performance libraries such as TensorFlow are used \citep{asadi2020block,hribar2020algorithms}.

\subsubsection{Biomedical applications}

Matrix factorization has also been successfully applied in statistical modeling, particularly in the domain of recommendation systems and collaborative filtering. The most relevant for our work is the paper by \citet{lever2018collaborative}, who introduced collaborative filtering to biomedical knowledge discovery and demonstrated that the SVD algorithm substantially outperforms traditional literature-based discovery approaches. A social matrix tri-factorization model with two auxiliary matrices was developed by \citet{zhang2016nonnegative} to significantly improve the accuracy of prediction in micro-blogging; these models also capture user interests more precisely and give better prediction interpretability. A non-negative matrix tri-factorization approach for multi-omics representation learning with applications to drug repurposing and drug selection was proposed by \cite{messa2024nmtf}, exploiting a multipartite graph structure and outperforming traditional NMTF-based predictors in predicting drug--disease associations. Most recently, an improved drug repositioning model (IDDNMTF) integrating multiple heterogeneous datasets for greater precision in predicting drug--disease associations was presented in \cite{li2025drug}.

It is important to note that a majority of analytical methods for complex data rely on path-based similarity measures and thus do not fully represent the latent features of different object types and different relation types. The present work addresses this gap by tackling the full \eqref{eqn:SONMTF} problem, which simultaneously imposes non-negativity and orthogonality on a shared factor matrix across multiple symmetric relation matrices.

\subsection{Our contribution}

From the computational point of view, the basic NMF, the \ref{eqn:NMTF} and \eqref{eqn:SNMTF}, and the generalization \ref{eqn:SONMTF} are all instances of hard (NP-hard)  optimization problems \cite{vavasis2010complexity}. Moreover, due to the large size of datasets, these problems contain a huge number of variables, typically in the order of millions. Therefore, we cannot expect (unless P=NP)  to solve such problems to optimality, but we can strive to obtain solutions that are as good as possible (good local optima) using state-of-the-art techniques from non-linear optimization, and best available computing machines, i.e. supercomputers. 

In this paper, we propose two algorithms to solve \ref{eqn:SONMTF}. They are based on two well-known algorithms from non-linear optimization: on the fixed point algorithm and the adaptive moment estimation algorithm (ADAM). 
We present details about the main steps of both algorithms in \Cref{sec:fpm_adam}.
For ADAM, we developed a three-stage algorithm to meet the non-negativity of all factors and the orthogonality of $G$.
We have implemented the methods in Matlab and Python and present in \Cref{sec:num_artificial_data} numerical results, obtained on artificial (synthetic) datasets, which show that both methods recover factorizations that are close to global optimum and are stable under noise. \Cref{sec:real-data-exp} contains numerical results for three real citation network datasets, where our algorithms were used for link prediction and clustering the network nodes. Our results outperform results obtained by the baseline approaches, like the other matrix factorization methods (SVD) and link prediction heuristics (common neighbours, Jaccard coefficient, and Adamic/Adar index).

\subsection{Notations}

We denote by $\|\cdot\|_F$ or simply by $\|\cdot\|$ the Frobenius matrix norm and by $\odot$ and $\oslash$ the Hadamard matrix product and matrix division, respectively. Sometimes, we consider the entry-wise matrix 1-norm, which is the sum of absolute values of matrix entries, which we denote by $\|\cdot\|_{1,1}$. In this paper, we use two measures for factorization quality of feasible solutions for \ref{eqn:SONMTF}: the \emph{square error} ($\SE$) and \emph{mean square error} ($\MSE$), as follows:
\begin{align}
    \SE = & \sum_i\|R_i-GS_iG^\top\|^2,\\
    \MSE = & \frac{\sum_i\|R_i-GS_iG^\top\|^2}{\sum_i\|R_i\|^2}.
\end{align}
The first measure is actually the objective function of \ref{eqn:SONMTF}, while the second is a relative value of the $\SE$ compared to the size of the input data.
We use $\SE$ to define algorithms (gradients, step sizes), while $\MSE$ is used as one of the stopping criteria.

Computed solution $G,S_i$ can be infeasible for \ref{eqn:SONMTF} for two reasons: (i) some elements of $G$ or $S_i$ are negative or (ii) the columns of $G$ are not orthogonal. By construction the first situation cannot happen by the proposed methods, so we measure deviation from a feasibility for \ref{eqn:SONMTF} only by

\bea \label{def:infeas_OG}
\errG:= \frac{\| G^TG-I\|_F}{\|I\|_F}.
\eea

This measure is always non-negative and the larger it is, the less orthonormal are the columns of $G$. Unfortunately, it has no upper bound, so we cannot accurately estimate how close to feasibility the matrix $G$ is. We chose to use this measure because it is a common practice in optimization to normalise the norm of the difference between the left and right sides of constraints \citep[see e.g.,][]{mittelmann2003independent}. 

\section{Fixed point and ADAM approach to solve SONMTF}
\label{sec:fpm_adam}

\subsection{Fixed point method}

The standard method to solve several variants of non-negative matrix factorization problems approximately is to write down the Karush--Kuhn--Tucker conditions for the problem  and then reformulate them into a fixed point relation and apply iterative improvements.
In our case, we first move the orthogonality constraint $G^\top G=I$ to the objective function as the following penalty term $\frac{\alpha}{2}\|G^\top G-I\|^2$ ($\alpha$ is a positive penalty term which forces the orthogonality constraint to be approximately satisfied, the larger the $\alpha$, the more orthogonal is $G$, but the $\MSE$ may also increase as an unwanted side effect) and then develop the KKT conditions, as follows: \\

\noindent\emph{Stationarity}
\begin{align}\footnotesize
 -4\sum_i R_i G S_i + 4 \sum_i G S_i G^\top G S_i - \beta +\alpha GG^\top G-\alpha G~ =~ & 0\label{eqn:kk1} \\
 -2 G^\top R_i G +2 G^\top G S_i G^\top G - \gamma_i ~=~ & 0,~\forall i \label{eqn:kk2}
\end{align}
\emph{Primal feasibility}
\begin{align}
G\ge 0,~S_i\ge 0,~\forall i \label{eqn:kk3}
\end{align}
\emph{Dual feasibility}
\begin{align}\label{eqn:kk4} 
\beta,~\gamma_i \ge 0,~\forall i 
\end{align}
\emph{Complementary slackness}
\begin{align}
\langle \beta, G \rangle~ =~ 0 &\label{eqn:kk5}\\
\langle \gamma_i, S_i \rangle~ =~ 0 &,~\forall i \label{eqn:kk6}
\end{align}

Note that $\beta\ge 0$ and $\gamma_i\ge 0$ are dual matrix variables corresponding to $G$ and $S_i$, respectively. The Slater constraint qualification, as described by \citet{Bertsekas-2016}, is trivially satisfied for the constraints of \ref{eqn:SONMTF}, therefore it follows from the Lagrangian theory that the  candidates for a local minimum of the \ref{eqn:SONMTF} must be stationary points, which means that they must satisfy beside \eqref{eqn:kk1}--\eqref{eqn:kk2} also  the  conditions \eqref{eqn:kk3}--\eqref{eqn:kk6}.
Note that due to non-negativity of $G,S_i,\beta, \gamma_i$, we can rewrite complementarity slackness constraints as
\begin{align*}
 \beta\odot G ~=~~ \beta\odot G\odot G~=~ 0 &\\
 \gamma_i \odot S_i ~ =~ \gamma_i \odot S_i \odot S_i ~=~ 0 &,~\forall i. 
\end{align*}

By substitution $\beta, \gamma_i$ in these reformulations using \eqref{eqn:kk1}--\eqref{eqn:kk2}, we obtain the following set of equations for $G$:

\begin{align}
G\odot G \odot ( 4 \sum_i G S_i G^\top G S_i  +\alpha GG^\top G) ~=~ G\odot G \odot ( 4\sum_i R_i G S_i + \alpha G)\label{eq:compl1}\\
 S_i \odot S_i \odot ( G^\top R_i G) ~=~ 
 S_i \odot S_i \odot(G^\top G S_i G^\top G ),~\forall i.\label{eq:compl2}
\end{align}

This finally yields the following update rules:

\begin{align}
G=G\odot\sqrt{(4\sum_i R_i G S_i + \alpha G)\oslash (4 \sum_i G S_i G^\top G S_i  +\alpha GG^\top G)}\label{eq:mu1}\\
S_i=S_i\odot\sqrt{(G^\top R_i G)\oslash (G^\top G S_i G^\top G)},~\forall i \label{eq:mu2}
\end{align}
where $\sqrt{\phantom{-}}$ is applied component-wise. The fixed point method to solve \ref{eqn:SONMTF} is presented in \Cref{fig:alg_fpm}.

\begin{algorithm}
\caption{Fixed point method to solve \ref{eqn:SONMTF}}
\label{fig:alg_fpm}
\begin{algorithmic}[1]
\Require Non-negative symmetric matrices $R_1,\ldots,R_N$
\Ensure $G,\, S_1,\ldots,S_N$
\State \textbf{Initialisation:} Compute initial non-negative matrices $G$ and $S_i$ with $G^\top G=I$
\While{termination test not satisfied}
    \State Compute $G_{\text{new}}$ from current $G$ and $S_i$ using \eqref{eq:mu1}
    \State $G \leftarrow G_{\text{new}}$
    \For{$i = 1, 2, \ldots, N$}
        \State Compute $(S_i)_{\text{new}}$ from $G$ and current $S_i$ using \eqref{eq:mu2}
        \State $S_i \leftarrow (S_i)_{\text{new}}$
    \EndFor
\EndWhile
\end{algorithmic}
\end{algorithm}

\subsection{Adaptive moment estimation}

In this section we present an algorithm for solving \ref{eqn:SONMTF} that is based on Adaptive moment estimation \citep[ADAM,][]{adam}. ADAM is a gradient-based optimization algorithm widely applied for training deep neural networks. Therefore, we have empirical evidence that ADAM works well in high-dimensional search spaces and can efficiently follow (even sparse) gradients using momentum and per-parameter learning rate adaptation. However, ADAM cannot be straightforwardly applied to optimization problems with constraints. Using ADAM for solving \ref{eqn:SONMTF}, where both non-negativity and orthogonality must be ensured, requires additional techniques for guiding/restricting the algorithm to the feasible region. 
We present a novel three-stage algorithm which uses ADAM whose final solutions fully respect the imposed constraints of non-negativity and orthogonality.

From the perspective of algorithm development the two types of imposed constraints are quite different in nature. Non-negativity can be imposed by search space transformation, as shown by \cite{hribar2020algorithms}. Using this method ADAM operates in an unconstrained space while the objective function evaluation is done in a transformed space which only includes non-negative matrices. This technique works well because non-negative matrices form a single connected region. Using search space transformation it is, therefore, possible for ADAM to traverse through the non-negative part of the search space in the same manner as in the unconstrained case. If we add orthogonality constraint, however, the feasible subset breaks apart to a large number of weakly connected components. This disconnectedness of the feasible search space is a major obstruction when using local, gradient descent type algorithm such as ADAM. This is why the algorithm proposed in this section addresses the two constraints separately, by two distinct approaches. 

The proposed algorithm based on ADAM consists of three stages. In the first stage ADAM with non-negativity constraint explained in section \ref{sec:adam_nn} is executed. In this stage a high quality solution is found that respects non-negativity but not orthogonality.\footnote{We observed in the experiments, however, that the solutions found in the first stage are typically quite close to being orthogonal.} In the second stage the solution is orthogonalized so that it exactly fulfils both constraints.
In the third stage restricted ADAM with non-negativity constraint explained in section \ref{sec:restricted_adam} is executed. In this stage the orthogonal non-negative solution acquired in the second stage is refined by a type of ADAM that is restricted to the specific submanifold of orthogonal non-negative matrices found to be promising in stage two. The three stages of the algorithm are depicted in \Cref{fig:three_stages}.

\begin{figure}[ht]
    \centering
    \newcommand{\dminx}{-50}
\newcommand{\dmaxx}{245}
\newcommand{\dminy}{-0.3}
\newcommand{\dmaxy}{0.8}
\newcommand{\gmx}{36}
\newcommand{\gmy}{0.43}
\newcommand{\fgmx}{150}
\newcommand{\fgmy}{0.13}
\begin{tikzpicture}
\begin{axis}[
             width=0.8\textwidth,
             height=0.6\textwidth,
             view={0}{90},
             hide axis,
            ]
\addplot3[opacity=20,
          surf,
          shader=interp,
          samples=50,
          colormap name=viridis,
          domain=\dminx:\dmaxx,
          domain y=\dminy:\dmaxy,
         ]{exp(-3*(y-0.5)^2-(x*sin(2*y)-0.5)^2) - 0.5*y^2};
\path[fill=white, fill opacity=0.15] (axis description cs:0,0) rectangle (axis description cs:1,1);
\node [draw=orange!95!black, text=orange!95!black, anchor=south, ultra thick, align=center] (nom) at (axis cs:\fgmx,0.85\dminy) {%
\bf Orthogonal\\ \bf solutions};
\draw [orange!95!black, ultra thick] (axis cs:\fgmx,\dminy) -- (nom);
\draw [orange!95!black, ultra thick] (nom) -- (axis cs:\fgmx,\dmaxy);
\draw [orange!95!black, ultra thick] (axis cs:300, 0.1) -- (axis cs:150,0.9);
\draw [orange!95!black, ultra thick] (axis cs:300, 0.6) -- (axis cs:150,0.9);
\draw [black, line width=3, -latex] (axis cs:\gmx, -0.2) to[out=130,in=210] node [right, align=center] {\bf 1. ADAM with \\ \bf non-negativity} (axis cs:\gmx, \gmy);
\draw [black, line width=3, -latex] (axis cs:\gmx, \gmy) -- node [above, pos=0.45] {\bf 2. Orthogonalization} (axis cs:\fgmx, \gmy);
\draw [black, line width=3, -latex] (axis cs:\fgmx, \gmy) -- node [right, pos=0.65, align=center] {\bf 3. Restricted \\ \bf ADAM with \\ \bf non-negativity} (axis cs:\fgmx,\fgmy);
\addplot[red!80, only marks, mark size=3] coordinates {(\gmx, \gmy) (\fgmx, \gmy) (\fgmx,\fgmy) (\gmx, -0.2)};
\end{axis}
\end{tikzpicture}
    \caption{Depiction of the three stages of ADAM based algorithm for finding orthogonal non-negative solutions of \ref{eqn:SONMTF}. Different positions in the plane represent different non-negative solutions of \ref{eqn:SONMTF} with hue indicating objective function (MSE) value for that solution. The orange lines show the feasible set of \ref{eqn:SONMTF} (non-negative orthogonal solutions) that geometrically consists of many weakly connected components. The black arrows show the effect that each of the three stages have in the search space.}
    \label{fig:three_stages}
\end{figure}

\subsubsection{ADAM with non-negativity constraint}
\label{sec:adam_nn}

In the first stage of the proposed algorithm ADAM is applied in a way so that the non-negativity is fulfilled throughout the descent. This setting was analyzed in depth in our previous work \citep{hribar2020algorithms} which includes further details that are omitted in this section.
To ensure non-negativity of candidate solutions when applying ADAM we used search space transformation method. In this method we introduce new search variables $\tilde{G}$ and $\tilde{S_i}$ on which no constraint is imposed and ADAM uses those variables when performing gradient descent. However, instead of optimizing objective function SE we introduce a transformed version $\tildeSE$ so that the unconstrained problem ADAM is solving is equivalent to the constrained one. This transformation is described in \Cref{tab:trans}. If ADAM finds a good solution $\tilde{G}$, $\tilde{S_i}$ of the transformed problem, we can construct an equally good solution of the original problem by applying absolute value to the variables: $G=|\tilde{G}|$ and $S_i=|\tilde{S_i}|$.

\begin{table}[!h]
\centering
\caption{Transformation of the optimization problem with non-negativity constraint to a one without constraints. Absolute value is applied element-wise.}
\label{tab:trans}
\begin{tabular*}{\textwidth}{@{\extracolsep{\fill}}lccl@{}}
\toprule
& Search variables & Constraints & \multicolumn{1}{c}{Objective function} \\
\midrule
Original problem 
& $G$, $S_i$ 
& $G\geq 0$, $S_i\geq 0$ 
& $\displaystyle\SE = \sum_i\|R_i-GS_iG^\top\|^2$ \\
Transformed problem 
& $\tilde{G}$, $\tilde{S_i}$ 
& none 
& $\displaystyle\tildeSE = \sum_i \|R_i - |\tilde{G}| |\tilde{S}_i| |\tilde{G}|^\top\|^2$ \\
\bottomrule
\end{tabular*}
\end{table}

In order to minimize $\tildeSE$ by using ADAM a gradient of the objective function is needed. The gradient can be computed using the chain rule. By introducing 
\begin{equation}\label{eq:Di}
    \D_i = R_i - |\tilde{G}| |\tilde{S}_i| |\tilde{G}|^\top,
\end{equation}
the gradient of $\tildeSE$ can be expressed as \citep{hribar2020algorithms} 
\begin{align}
    \label{math:grad1}
   \grad_{\tilde{G}}\tildeSE &= -4 \sum_i \sign(\tilde{G})\odot\left(\D_i\, |\tilde{G}| \, |\tilde{S}_i|^\top \right) \\
    \label{math:grad2}
    \grad_{\tilde{S}_i}\tildeSE &= -2 \sum_i \sign(\tilde{S}_i)\odot\left(|\tilde{G}|^\top  \D_i\, |\tilde{G}|\right),
\end{align}
where $\sign$ is the component-wise signum function. It must be stressed that $\tildeSE$ is not differentiable for all points in its domain. For such points we used sub-derivatives \citep{subgradient} with a specific choice $\mathrm{d}|x|/\mathrm{d}x=0$ when $x=0$. This is a common approach and a default mode of operation when using TensorFlow library for gradient calculation with automatic differentiation.

Using an unconstrained formulation of the problem and having a way to calculate the gradient we can apply generic ADAM algorithm to it. Below we provide the update scheme for one step of ADAM algorithm \citep{adam} when minimizing $\tildeSE$ with respect to variable $x\in\{\tilde{G}, \tilde{S_i}\}$:
\begin{align}
    \label{math:adam_mom}
    M_x &\assign \beta_1 M_x+(1-\beta_1)\, \nabla_x\tildeSE \\
    \label{math:adam_v}
    V_x &\assign \beta_2 V_x + (1 - \beta_2)\, \nabla_x\tildeSE \odot \nabla_x\tildeSE \\
    \label{math:adam_nu}
    \eta &= \alpha\frac{\sqrt{1-\beta_2^t}}{1-\beta_1^t}\\
    \label{math:adam_update}
    x &\assign x-\eta\,M_x  \oslash (\sqrt{V_x}+\varepsilon),
\end{align}
where $\oslash$ denotes element-wise division of two matrices and $\sqrt{\phantom{a}}$ and adding $\varepsilon$ are applied element-wise. Above $\alpha$, $\beta_{1,2}$ and $\varepsilon$ are parameters of the algorithm, $M_x$ and $V_x$ are exponential moving averages of the first and second moment of the gradient, respectively, and $t$ is the number of steps made. We see that ADAM is a variation of gradient descent algorithm with several additional features. Steps are not made in direction of gradient but in direction of moving average of the gradient (Eq.~\eqref{math:adam_mom}). ADAM includes per-parameter learning rate adaptation \citep[as RMSProp,][]{rmsprop} to scale step size in accordance to the average magnitudes of the gradient's components (Eq.~\eqref{math:adam_v}). Unlike RMSProp, ADAM eliminates initialization bias in first and second moment estimation (Eq.~\eqref{math:adam_nu}).

The first stage of the algorithm is detailed in \Cref{alg:three_stage_adam} (lines~\ref{alg2:line:stage1b}--\ref{alg2:line:stage1e}).
When this method is applied to solve \eqref{eqn:SONMTF}, ADAM should converge toward a solution with small objective function value SE and respecting the non-negativity constraint of $G$ and $S_i$. However, the resulting $G$ is not orthogonal. To arrive to an orthogonal solution next two stages need to be applied.

\subsubsection{Orthogonalization}

The second stage of the proposed three stage algorithm is orthogonalization in which non-negative solution is mapped to the closest possible orthogonal non-negative solution (\Cref{alg:three_stage_adam}, lines~\ref{alg2:line:stage2b}--\ref{alg2:line:stage2e}). 
This step can result in an increase of the objective function SE (usually only slightly) because orthogonalization moves the solution onto the feasible set without taking the optimization landscape into account.
The naive way to perform such mapping is to set all but the maximal component of every row of $G$ to zero. This is the minimal alteration of $G$ (with respect to any $p$-norm) that produces orthogonal non-negative $G$.
However, it is possible to align this procedure more with the nature of the underlying problem, which is to find $G$ and $S_i$ such that $GS_iG^\top\approx R_i$.
We should find a way to orthogonalize $G$ that changes $\| GS _iG^\top\|$ the least, as opposed to changing $\|G\|$ the least.
To substantiate this difference further, consider the fact that non-negative solutions $G$, $S_i$ can be freely scaled with any positive vector $u$ in the following way
\begin{align}
\label{math:Gorth}
    G &\to G\,\mbox{diag}(u) \\
\label{math:Sorth}
    S_i &\to \mbox{diag}(u)^{-1}S_i\,\mbox{diag}(u)^{-1}.
\end{align}
Such transformation of the solution $G$, $S_i$  changes neither the objective function SE
nor the predicted matrices $GS_iG^\top$. This means that the result of orthogonalization based only on the components of $G$ is dependent on how we chose to scale $G$.

To construct an unbiased procedure for orthogonalization of $G$ we need to take values of $S_i$ into account as well. 
Given the nature of the underlying factorization problem the more instinctive measure of a distance in solution space should be $\sum_i\lVert GS_iG^\top\rVert$. 
When $G$ is orthogonalized the difference between orthogonal and non-orthogonal version should be minimal with respect to this distance. 
If, instead of Frobenius norm, we take the 1-norm, the closest orthogonal non-negative $G$ can be calculated analytically. Let us calculate how much each component of $G$ contributes to this norm.
\begin{equation}
\label{math:ortho1}
    \sum_i\big\lVert\,GS_iG^\top\big\rVert_{1,1} = \sum_i\sum_{jk}\Big|\sum_{lr}G_{jl}S_{ilr}G_{kr}\Big| = \sum_{jl} G_{jl} \sum_{ikr} S_{ilr}G_{kr} = \sum_{jl} G_{jl} u_l = \big\lVert G\,\mbox{diag}(u)\big\rVert_{1,1}
\end{equation}
\begin{equation}
\label{math:ortho2}
    u_l = \sum_{ikr} S_{ilr}G_{kr}
\end{equation}
In above calculation we used the fact that $G$ and $S_i$ are non-negative and that $S_i$ are symmetric. 
We can see that  $\sum_i\lVert GS_iG^\top\rVert_{1,1}$ is in fact equivalent to $\|G\|_{1,1}$  if the solution is properly scaled using \eqref{math:Gorth}, \eqref{math:Sorth} and \eqref{math:ortho2}.
By employing scaling \eqref{math:ortho2}, the orthogonalization of $G$ can be performed in a naive fashion, by setting all but the maximal component of every row of $G$ to zero.
It is important to note that here we treat $\|\cdot\|_{1,1}$ as an approximation to Frobenius norm which, if used, does not lead to closed form solution analogous to \eqref{math:ortho1} and \eqref{math:ortho2}.

Orthogonalization is performed on a candidate solution from stage one which should have small objective value $\SE$.
After orthogonalization we get a candidate solution that now fulfills all constraints (not just non-negativity) and is as close as possible to the one with low $\SE$ value.
This procedure allowed us to identify a connected subset of the feasible set (a single orange line in Fig.~\ref{fig:three_stages}) that is close to a region of the search space with small objective value $\SE$.
In the third stage we focus our search only to this subset and due to its connectedness we can apply restricted version of ADAM algorithm in a similar fashion as in stage one.

\subsubsection{Restricted ADAM with non-negativity constraint}
\label{sec:restricted_adam}

In the last stage of the proposed three stage algorithm (\Cref{alg:three_stage_adam}, lines~\ref{alg2:line:stage3b}--\ref{alg2:line:stage3e}) we apply ADAM from the current orthogonal non-negative solution in a way that all constraints are fulfilled along the descent.
The candidate solution after orthogonalization stage has a $G$ matrix where in each row only a single component is non-zero.
If we restrict the descent so that the candidate solution cannot leave the feasible set, then only the magnitudes of the non-zero components are allowed to change.
By this we limit our descent to a connected subset of feasible solutions as was discussed in the previous section.
The positions of non-zero components of $G$ determine the feasible subset on which the descent will take place.
Technically, different feasible subsets are connected in a topological sense, however, to move from one subset to another the descent needs to pass a point where $G$ matrix includes a row with all zero values.
Such a point has necessarily high SE value which means that gradient descent is very unlikely to follow such a path.
This is why we referred to the feasible subsets as weakly connected.
Since in this stage our focus is to find a good solution in the current feasible subset such obstructions of connectedness are irrelevant, however, understanding them is crucial for justifying why three stage algorithm is needed for this problem as opposed to a single stage one.

In this stage we apply the ADAM algorithm in the same manner as in stage one but with one additional step. We process the calculated gradient so that it cannot point outside of the feasible subset. By this we ensure that the descent is restricted to move only in the feasible subset (hence the name Restricted ADAM with non-negativity constraint). To achieve this we prepare a matrix $A$ of the same size as $\tilde{G}$ that will encode which components of $\tilde{G}$ are allowed to change. Let $\tilde{G}$ be orthogonal and non-negative result of stage two, then
\begin{equation}
A_{ij} = \begin{cases} 
      1 & \tilde{G}_{ij}> 0 \\
      0 & \text{otherwise} 
   \end{cases}.
    \label{math:mask}
\end{equation}
In general the $\nabla_{\tilde{G}}\tildeSE$ can point in a direction outside of the feasible subset. To project $\nabla_{\tilde{G}}\tildeSE$ to the feasible subset $\mathcal F$ we mask the gradient in the following way:
\begin{equation}
    \mbox{proj}_{\mathcal F}\nabla_{\tilde{G}}\tildeSE = A\odot\nabla_{\tilde{G}}\tildeSE.
\end{equation}
The $\nabla_{\tilde{S_i}}\tildeSE$ does not require any projection operation since the orthogonality is imposed on $\tilde{G}$ alone. Using the projected gradient together with Adam with non-negativity should refine the solution and lower the objective function value that was possibly corrupted by the orthogonalization stage.

\begin{algorithm}[p]
\caption{Three stage algorithm based on ADAM for solving \ref{eqn:SONMTF}}
\label{alg:three_stage_adam}
\begin{algorithmic}[1]
\Require Non-negative symmetric matrices $R_1,\ldots,R_N$
\Ensure $|\tilde{G}|,\, |\tilde{S}_1|,\, \ldots,\, |\tilde{S}_N|$
\State \textbf{Initialisation:} Compute initial non-negative matrices $\tilde{G}$ and $\tilde{S_i}$ with $\tilde{G}^\top \tilde{G}=I$
\Statex
\Statex \textbf{ADAM with non-negativity constraint}
\While{termination test not satisfied} \label{alg2:line:stage1b}
    \State Compute the gradient $\nabla_{\tilde{G}}\tildeSE$ and $\nabla_{\tilde{S_i}}\tildeSE$ using \eqref{math:grad1} and \eqref{math:grad2}
    \State Update $\tilde{G}$ using \eqref{math:adam_mom}--\eqref{math:adam_update}
    \For{$i = 1, 2, \ldots, N$}
        \State Update $\tilde{S}_i$ using \eqref{math:adam_mom}--\eqref{math:adam_update}
    \EndFor
\EndWhile
\State Make the solution non-negative: $\tilde{G} \leftarrow |\tilde{G}|$,\, $\tilde{S}_1 \leftarrow |\tilde{S}_1|$,\, $\ldots$,\, $\tilde{S}_N \leftarrow |\tilde{S}_N|$ \label{alg2:line:stage1e}
\Statex
\Statex \textbf{Orthogonalization}
\State Compute scaling vector $u$ using \eqref{math:ortho2} \label{alg2:line:stage2b}
\State Scale $\tilde{G}$ and $\tilde{S}_i$ using \eqref{math:Gorth} and \eqref{math:Sorth}
\State Set all but the maximum component of every row of $\tilde{G}$ to zero \label{alg2:line:stage2e}
\Statex
\Statex \textbf{Restricted ADAM with non-negativity constraint}
\State Compute the mask $A$ using \eqref{math:mask} \label{alg2:line:stage3b}
\While{termination test not satisfied}
    \State Compute the gradients $\nabla_{\tilde{G}}\tildeSE$ and $\nabla_{\tilde{S_i}}\tildeSE$ using \eqref{math:grad1} and \eqref{math:grad2}.
    \State Project the gradient to the feasible subspace: $\nabla_{\tilde{G}}\tildeSE\assign A\odot \nabla_{\tilde{G}}\tildeSE$
    \State Update $\tilde{G}$ using \eqref{math:adam_mom}--\eqref{math:adam_update}
    \For{$i = 1, 2, \ldots, N$}
        \State Update $\tilde{S_i}$ using \eqref{math:adam_mom}--\eqref{math:adam_update}
    \EndFor
\EndWhile \label{alg2:line:stage3e}
\end{algorithmic}
\end{algorithm}

\clearpage

\section{Numerical experiments on synthetic data}
\label{sec:num_artificial_data}

\subsection{Synthetic data generation}

We tested FPM and ADAM algorithms on two data sets.
The first data set consists of synthetic random matrices that were used by \cite{hribar2020algorithms}.
This data set consists of 5-tuples of non-negative symmetric matrices $(R_1,\ldots,R_5)$ of order $n=100, 200, 500, 1000, 2000$ and $5000$. For each $n$ we first
selected $K\in \{10,20,30,40,50\}$ and constructed random non-negative $G$ of order $n\times K$ with orthogonal columns and five symmetric random non-negative matrices $S_i$ of order $K\times K$.
More precisely, for each pair $(n,K)$
we generated matrix $G$ of order $n\times K$ in two phases: (i) for each row, we randomly selected the column where the only non-zero element in this row will be set. This non-zero element is 1; (ii) in the second phase, we normalise the columns of $G$. In the first phase, we used the Matlab function {\tt randi}. The $K\times K$ matrices $S_i$ were generated by Matlab function {\tt sprandsym} and density $0.65$.
Finally, we computed $R_i=GS_iG^\top,~i=1,\ldots,5$, for each pair $(n,K)$.
These 5-tuples of matrices were input for \eqref{eqn:SONMTF}. By construction we know that if we allow the inner dimension $k\ge K$, the optimum value $\MSE_{opt}=0$.

The second data set is obtained from the first by adding to each $R_i$ a symmetric random matrix $E^\xi_i$ such that the expected values of $\|R_i+E^\xi_i-GS_iG^\top \|_F^2/\sum_i {\|R_i\|_F^2}$ are below the predefined 
$ \xi$.
For each $i$ we created $E^\xi_i$ in two steps. Firstly, we created a non-symmetric matrix $ E^\xi_i$ such that its entries were independent, uniformly distributed random numbers on the interval $[0,\tau]$ (we used 
Matlab function {\tt rand} with parameter $\tau$).
In the second step, we symmetrize it by
$E^\xi_i=(E^\xi_i+ (E^\xi_i)^\top)/2$.

\begin{lemma}
    Let  $\xi>0$ and $$\tau(\xi):=\frac{1}{n}\sqrt{\frac{2\xi }{3}\sum_i \|R_i\|_F^2}.$$
If $R_i$,   $E^\xi_i$, $G$, and $S_i$ are created as described above and  $R^\xi_i:=R_i+E^\xi_i$, then  the expected value of 
$$\sum_i\frac{\|R^\xi_i-GS_iG^\top\|_F^2}{\sum_i \|R_i\|_F^2}$$ is below $\xi$, for all $i$.
    \end{lemma}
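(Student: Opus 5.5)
The key observation is that, by construction, $R_i = GS_iG^\top$ exactly, so $R^\xi_i - GS_iG^\top = E^\xi_i$. The quantity whose expectation we must bound is therefore the purely random expression $\sum_i \|E^\xi_i\|_F^2 / \sum_i\|R_i\|_F^2$. The denominator is deterministic once the $R_i$ are fixed, so the plan is to compute $\mathbb{E}\|E^\xi_i\|_F^2$ entrywise by linearity of expectation, and then substitute the definition of $\tau(\xi)$.

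Write $E^\xi_i = (F+F^\top)/2$, where $F$ has i.i.d.\ entries uniform on $[0,\tau]$. I will split the entries into two cases.

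\emph{Diagonal entries.} Here $(E^\xi_i)_{jj} = F_{jj}$, so
$$\mathbb{E}\big[(E^\xi_i)_{jj}^2\big] = \int_0^\tau x^2\,dx/\tau = \tau^2/3.$$

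\emph{Off-diagonal entries.} For $j\neq k$, $(E^\xi_i)_{jk} = (F_{jk}+F_{kj})/2$ is an average of two independent $U[0,\tau]$ variables. Its mean is $\tau/2$ and its variance is $\tfrac{1}{2}\cdot\tau^2/12$. Hence
$$\mathbb{E}\big[(E^\xi_i)_{jk}^2\big] = \frac{\tau^2}{24} + \frac{\tau^2}{4} = \frac{7\tau^2}{24}.$$

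Summing over the $n$ diagonal and $n(n-1)$ off-diagonal entries gives
$$\mathbb{E}\|E^\xi_i\|_F^2 = \frac{n\tau^2}{3} + \frac{7n(n-1)\tau^2}{24} = \frac{7n^2\tau^2}{24} + \frac{n\tau^2}{24} \le \frac{n^2\tau^2}{3}.$$

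Now insert $n^2\tau(\xi)^2 = \tfrac{2\xi}{3}\sum_i\|R_i\|_F^2$. Each single term then satisfies
$$\frac{\mathbb{E}\|E^\xi_i\|_F^2}{\sum_i\|R_i\|_F^2} \le \frac{2\xi}{9} < \xi.$$
This gives the per-$i$ reading of the statement (``for all $i$'').

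For the full sum over the $N=5$ matrices, the crude bound $\tau^2/3$ is not enough, since it only gives $10\xi/9$. So I will use the exact value instead. The sum is
$$5\Big(\frac{7}{24}+\frac{1}{24n}\Big)\frac{2\xi}{3} = \frac{35\xi}{36} + \frac{5\xi}{36n},$$
which is at most $\xi$ precisely when $n\ge 5$. This always holds, since $n\ge 100$ in the data set.

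The only delicate step is this sharpening. One must keep the exact second moment of the off-diagonal entries, $7\tau^2/24$, rather than bounding every entry by $\tau^2/3$, and then check that the small diagonal correction is absorbed because $n$ is large.
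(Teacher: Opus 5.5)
Your proposal is correct and follows the paper's proof essentially step for step: the same entrywise second-moment computation gives $\mathbb{E}\|E^\xi_i\|_F^2=(7n^2+n)\tau^2/24$, and substituting $\tau(\xi)$ then gives $(35+5/n)\xi/36\le\xi$ for $n\ge 5$. The only differences are cosmetic: you obtain $7\tau^2/24$ from the mean and variance rather than by expanding the square, and you add an optional per-$i$ bound.
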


\begin{proof}
Suppose  $\tilde E=(E+E^\top)/2$  is a random matrix, constructed as explained above, and $E=(e_{ij})_{ij}$.
Then $\mathbb{E}(\|\tilde E\|^2_F)=\sum_{i}\mathbb{E}(e_{ii}^2)+\sum_{i\neq j}\mathbb{E}((\frac{e_{ij}+e_{ji}}{2})^2)$.
By using basic properties of uniform distribution, we have $\mathbb{E}(e_{ij})=\tau/2$ and $\mathbb{E}(e_{ij}^2)=\tau^2/3$, for all $i,j$, hence
$\mathbb{E}(\|\tilde E\|^2_F)=n\tau^2/3+n(n-1)\frac{7\tau^2}{24}=(7n^2+n)\tau^2/24$. Therefore, 

\begin{align*}
\mathbb{E}\Big[\sum_i \frac{\|R^\xi_i-GS_iG^\top\|_F^2}{\sum_i \|R_i\|_F^2}\Big] = &  \mathbb{E}\Big[\sum_i \frac{\|E^\xi_i\|_F^2}{\sum_i \|R_i\|_F^2}\Big] =  \frac{5(7n^2+n)\tau^2}{24\sum_i \|R_i\|_F^2}\\ = & \frac{35n^2+5n}{24\sum_i\|R_i\|_F^2}\cdot \frac{2\xi\sum_i\|R_i\|_F^2}{3n^2}\le \xi, ~~~\mbox{for all}~n\ge 5.
\end{align*}
\end{proof}
The second data set, therefore, consists of 5-tuples of noisy matrices $R^\xi_i:=R_i+E^\xi_i$, for $\xi=10^{-6},10^{-4},10^{-2}.$

\subsection{Experimental results}

\Cref{tab:FPM_synthetic_data} contains the results obtained by FPM and ADAM on the first synthetic dataset. The computations were performed on the high-performance computing cluster at the Faculty of Mechanical Engineering, University of Ljubljana, which consists of 40 compute nodes: 20 Haswell nodes, each equipped with two Intel Xeon E5-2680 v3 12-core processors running at a base clock of 2.5 GHz and 64 GB of DDR4-2133 RAM, and 20 Rome nodes, each equipped with two AMD EPYC 7402 24-core processors running at a base clock of 2.8 GHz and 128 GB of DDR4-3200 RAM. Parallelization was achieved using the Matlab Parallel Computing Toolbox and the Python library TensorFlow.

For each $n\in\{100$, $200$, $500$, $1000$, $2000$, $5000\}$ and $K\in\{10$, $20$, $30$, $40$, $50\}$ we ran both methods with deterministic starting solution $G$, obtained from spectral decomposition of matrix $R=\sum_i R_i$ by taking the eigenvectors and eigenvalues corresponding to the $k$ largest absolute values of eigenvalues of $R$ \citep[for details see][Section 5.1]{hribar2020algorithms}. We solved \eqref{eqn:SONMTF} for each $K$ for each inner dimension $k=0.2K$, $0.4K$, $0.6K$, $0.8K$, $1.0K$, $1.2K$. By choosing various $k$ we simulate real-world situation where the true inner dimension $k$ is usually not known. Nevertheless, by construction of our synthetic data-set, we know that for $k=1.0K,1.2K$ the optimum value of $\SE$ and $\MSE$ is 0  and there exist solutions $G, S_1,\ldots,S_5$ with $\errG=0$. Our results demonstrate the capability of both methods to approach a global optimum.

\Cref{tab:FPM_synthetic_data} shows the mean values of the $\MSE$ and $\errG$, for each $n$, for each chosen value of $k_{rel}=k/K\cdot 100$, for FPM and ADAM.
The first row of this table therefore shows the mean values of $\MSE$ for $n=100$ and for $k/K=20~\%$. This means that we computed and reported for both algorithms the mean values of five $\MSE$s and $\errG$s: for pairs $k=2,K=10;~k=4,K=20;~k=6,K=30;~k=8,K=40$ and $k=10,K=50$.

We can see from \Cref{tab:FPM_synthetic_data} that FPM and ADAM are very competitive regarding the $\MSE$ and $\errG$. 
For $k_{rel}<100\%$, where the prescribed inner dimension is smaller than the true one, the $\MSE$ decreases steadily as $k_{rel}$ increases. In this regime ADAM usually gives slightly smaller values of $\MSE$, while FPM gives noticeably smaller values of $\errG$. Thus, the two methods make a different trade-off: ADAM fits the data marginally better, whereas FPM keeps the matrix $G$ closer to orthogonal.

For $k_{rel}=100\%$ both algorithms find solutions which are very close to the optimum. ADAM reaches almost zero $\MSE$ and very small $\errG$, while FPM also reaches almost zero $\MSE$, but with larger $\errG$. For $k_{rel}=120\%$, both algorithms again obtain almost zero $\MSE$. In this case the factorization is less unique because more latent components are allowed than were used to generate the data, and therefore $\errG$ is less stable. Overall, based on the $\MSE$ and $\errG$, neither method can be declared uniformly better: ADAM is slightly preferable with respect to $\MSE$, while FPM is usually preferable with respect to $\errG$.

\afterpage{%\clearpage
\begin{table}[p]
\centering
\caption{Accumulated results of FPM and ADAM on synthetic data-set. Results for FPM correspond to $\alpha=1$.}
\begin{tabular}{rr|rr|rr}
  \toprule
  $n$ & $k_{rel}$ & MSE FPM & $\errG$ FPM & MSE ADAM & $\errG$ ADAM\\ 
  \midrule
   100 &  20 & 0.5294 & 0.3485 &  0.5138 & 1.1975 \\ 
   100 &  40 & 0.4236 & 0.2779 &  0.3858 & 1.1132 \\ 
   100 &  60 & 0.2843 & 0.1845 &  0.2685 & 0.7208 \\ 
   100 &  80 & 0.1446 & 0.1138 &  0.1409 & 0.3082 \\ 
   100 & 100 & 0.0001 & 0.4222 &  0.0000 & 0.0029 \\ 
   100 & 120 & 0.0001 & 0.4941 &  0.0000 & 0.5705 \\ \midrule
   200 &  20 & 0.5318 & 0.3552 &  0.5236 & 1.2180 \\ 
   200 &  40 & 0.4166 & 0.2814 &  0.3856 & 1.2221 \\ 
   200 &  60 & 0.2883 & 0.1938 &  0.2676 & 0.6596 \\ 
   200 &  80 & 0.1481 & 0.1126 &  0.1375 & 0.3220 \\ 
   200 & 100 & 0.0001 & 0.4133 &  0.0000 & 0.0058 \\ 
   200 & 120 & 0.0001 & 0.4203 &  0.0000 & 0.5495 \\ \midrule
   500 &  20 & 0.5325 & 0.3477 &  0.5151 & 1.1170 \\ 
   500 &  40 & 0.4215 & 0.2656 &  0.3919 & 1.0901 \\ 
   500 &  60 & 0.2865 & 0.1811 &  0.2733 & 0.6885 \\ 
   500 &  80 & 0.1456 & 0.1263 &  0.1360 & 0.3657 \\ 
   500 & 100 & 0.0001 & 0.3695 &  0.0000 & 0.0058 \\ 
   500 & 120 & 0.0012 & 0.3498 &  0.0000 & 0.6125 \\ \midrule
  1000 &  20 & 0.5303 & 0.3600 &  0.5183 & 1.3755 \\ 
  1000 &  40 & 0.4207 & 0.2566 &  0.3928 & 1.2012 \\ 
  1000 &  60 & 0.2884 & 0.1742 &  0.2694 & 0.6868 \\ 
  1000 &  80 & 0.1586 & 0.1143 &  0.1393 & 0.3059 \\ 
  1000 & 100 & 0.0001 & 0.4316 &  0.0000 & 0.0058 \\ 
  1000 & 120 & 0.0002 & 0.4171 &  0.0000 & 0.6195 \\ \midrule
  2000 &  20 & 0.5107 & 0.2484 &  0.5071 & 1.0705 \\ 
  2000 &  40 & 0.3825 & 0.2128 &  0.3770 & 1.0153 \\ 
  2000 &  60 & 0.2484 & 0.1909 &  0.2578 & 0.5763 \\ 
  2000 &  80 & 0.1281 & 0.1154 &  0.1308 & 0.3082 \\ 
  2000 & 100 & 0.0023 & 0.3200 &  0.0001 & 0.0174 \\ 
  2000 & 120 & 0.0030 & 0.2159 &  0.0000 & 0.4935 \\ \midrule
  5000 &  20 & 0.5238 & 0.3681 &  0.5114 & 1.2015 \\ 
  5000 &  40 & 0.4088 & 0.2781 &  0.3790 & 1.1935 \\ 
  5000 &  60 & 0.2804 & 0.1877 &  0.2627 & 0.7038 \\ 
  5000 &  80 & 0.1558 & 0.1121 &  0.1329 & 0.3082 \\ 
  5000 & 100 & 0.0006 & 0.3304 &  0.0001 & 0.0232 \\ 
  5000 & 120 & 0.0007 & 0.3286 &  0.0001 & 0.5460 \\
  \bottomrule
\end{tabular}
\label{tab:FPM_synthetic_data}
\end{table}
}

\Cref{fig:MSE_vs_alpha} illustrates the dependence of $\MSE$ and $\errG$ on the penalty parameter $\alpha$, using the data from Table \ref{tab:MSE_vs_alpha}. As expected, larger values of $k_{rel}$ lead to smaller reconstruction errors. The parameter $\alpha$ controls the trade-off between the two criteria: smaller values of $\alpha$ favour a lower $\MSE$, but may lead to larger violations of the orthogonality constraint, whereas larger values of $\alpha$ reduce $\errG$ at the cost of a higher $\MSE$.
Among the tested values, $\alpha=100$ gives the most balanced behaviour, since the corresponding points are closest to the origin in the $(\MSE,\errG)$ plane. We therefore use $\alpha=100$ in the remaining experiments.

\afterpage{%
\begin{figure}[p]
\centering
\includegraphics[width=.9\textwidth]{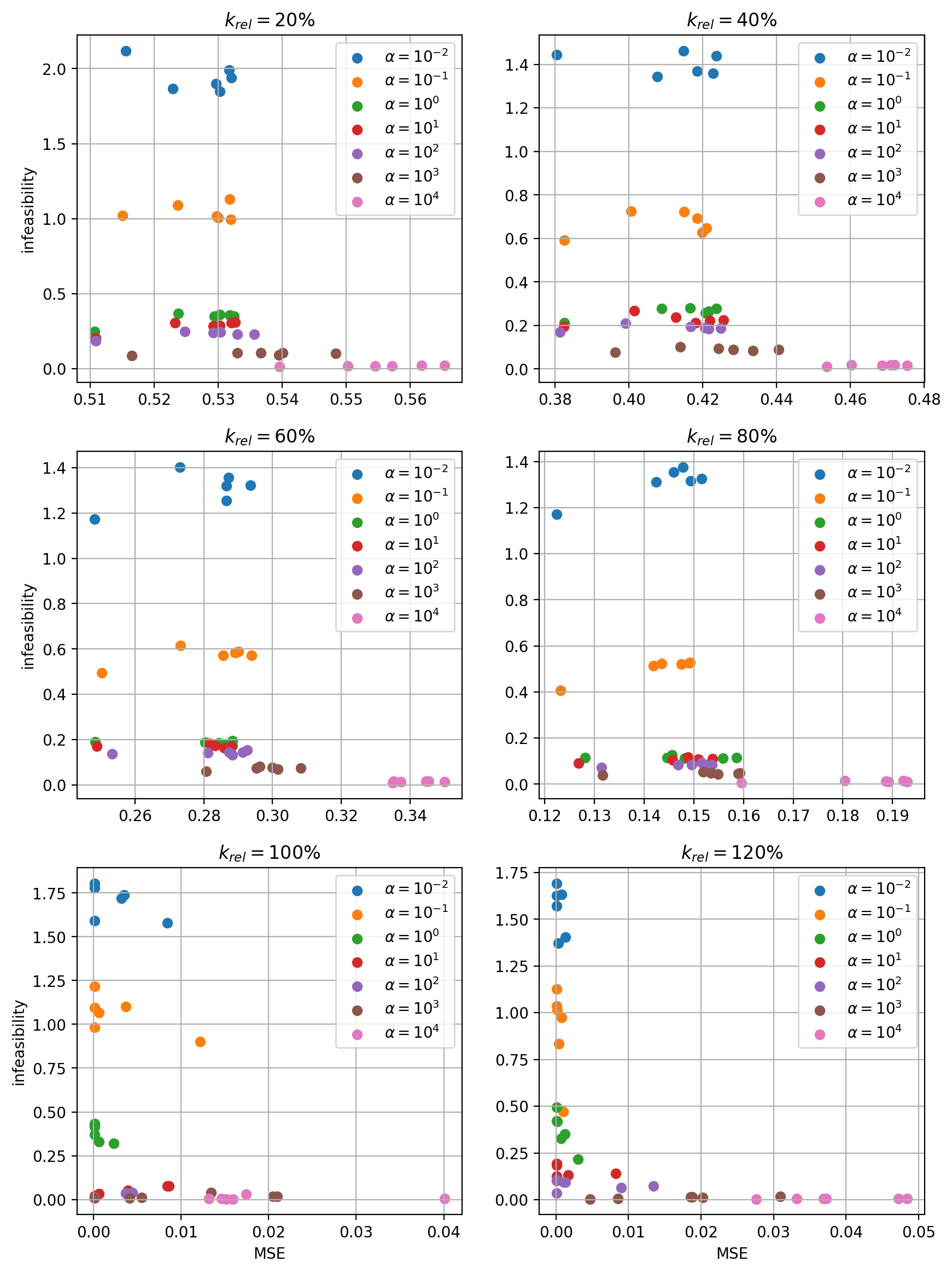}
\caption{Relation between $\MSE$ and $\errG$, computed by FPM with different $\alpha$, for all $k_{rel}=20,40,\ldots,120$.}
\label{fig:MSE_vs_alpha}
\end{figure}
}

\afterpage{%
\begin{sidewaystable}
\thisfloatpagestyle{empty}
\caption{$\MSE$ and $\errG$ for synthetic instances, for $\alpha=10^{-2},10^{-1},10^{0},10^{1},10^{2},10^{3},10^{4}$ }\label{tab:MSE_vs_alpha}
\begin{tabular}{r|r|rr|rr|rr|rr|rr|rr|rr}
\toprule
\multirow{2}{*}{$n$} & \multirow{ 2}{*}{$k_{rel}$}  & \multicolumn{2}{c|}{$\alpha=0.01$} & \multicolumn{2}{c|}{$\alpha=0.1$} & \multicolumn{2}{c|}{$\alpha=1$} & \multicolumn{2}{c|}{$\alpha=10$} & \multicolumn{2}{c|}{$\alpha=100$} & \multicolumn{2}{c|}{$\alpha=1000$} & 
\multicolumn{2}{c}{$\alpha=10000$} \\  
      &    & $\MSE$ & $\errG$& $\MSE$ & $\errG$& $\MSE$ & $\errG$& $\MSE$ & $\errG$& $\MSE$ & $\errG$& $\MSE$ & $\errG$& $\MSE$ & $\errG$ \\
\midrule
100 & 20 & 0.5303 & 1.8491 & 0.5300 & 1.0044 & 0.5294 & 0.3485 & 0.5293 & 0.2820 & 0.5293 & 0.2397 & 0.5367 & 0.1049 & 0.5545 & 0.0176 \\ 
  100 & 40 & 0.4227 & 1.3596 & 0.4198 & 0.6268 & 0.4236 & 0.2779 & 0.4219 & 0.2234 & 0.4215 & 0.1845 & 0.4336 & 0.0854 & 0.4709 & 0.0186 \\ 
  100 & 60 & 0.2866 & 1.2545 & 0.2939 & 0.5714 & 0.2843 & 0.1845 & 0.2859 & 0.1638 & 0.2927 & 0.1531 & 0.3015 & 0.0684 & 0.3373 & 0.0139 \\ 
  100 & 80 & 0.1424 & 1.3123 & 0.1419 & 0.5125 & 0.1446 & 0.1138 & 0.1509 & 0.1074 & 0.1468 & 0.0848 & 0.1534 & 0.0481 & 0.1929 & 0.0112 \\ 
  100 & 100 & 0.0001 & 1.7792 & 0.0122 & 0.9011 & 0.0001 & 0.4222 & 0.0001 & 0.0134 & 0.0037 & 0.0372 & 0.0134 & 0.0393 & 0.0174 & 0.0306 \\ 
  100 & 120 & 0.0001 & 1.5723 & 0.0001 & 1.1260 & 0.0001 & 0.4941 & 0.0001 & 0.1945 & 0.0001 & 0.1041 & 0.0188 & 0.0149 & 0.0332 & 0.0058 \\ \midrule
  200 & 20 & 0.5317 & 1.9915 & 0.5318 & 1.1314 & 0.5318 & 0.3552 & 0.5321 & 0.3036 & 0.5330 & 0.2269 & 0.5395 & 0.0891 & 0.5572 & 0.0169 \\  
  200 & 40 & 0.4148 & 1.4613 & 0.4149 & 0.7221 & 0.4166 & 0.2814 & 0.4127 & 0.2373 & 0.4167 & 0.1937 & 0.4243 & 0.0952 & 0.4685 & 0.0175 \\ 
  200 & 60 & 0.2873 & 1.3562 & 0.2892 & 0.5832 & 0.2883 & 0.1938 & 0.2832 & 0.1729 & 0.2883 & 0.1322 & 0.3000 & 0.0759 & 0.3455 & 0.0160 \\ 
  200 & 80 & 0.1515 & 1.3251 & 0.1492 & 0.5269 & 0.1481 & 0.1126 & 0.1509 & 0.1085 & 0.1536 & 0.0845 & 0.1519 & 0.0526 & 0.1922 & 0.0138 \\ 
  200 & 100 & 0.0035 & 1.7391 & 0.0001 & 1.2178 & 0.0001 & 0.4133 & 0.0001 & 0.0207 & 0.0001 & 0.0113 & 0.0001 & 0.0074 & 0.0159 & 0.0031 \\ 
  200 & 120 & 0.0001 & 1.6906 & 0.0001 & 1.0360 & 0.0001 & 0.4203 & 0.0001 & 0.1836 & 0.0013 & 0.0947 & 0.0202 & 0.0114 & 0.0484 & 0.0053 \\ \midrule
  500 & 20 & 0.5321 & 1.9398 & 0.5320 & 0.9952 & 0.5325 & 0.3477 & 0.5327 & 0.3071 & 0.5357 & 0.2294 & 0.5484 & 0.1014 & 0.5654 & 0.0193 \\ 
  500 & 40 & 0.4236 & 1.4401 & 0.4210 & 0.6469 & 0.4215 & 0.2656 & 0.4255 & 0.2244 & 0.4249 & 0.1866 & 0.4405 & 0.0898 & 0.4753 & 0.0158 \\ 
  500 & 60 & 0.2866 & 1.3197 & 0.2856 & 0.5708 & 0.2865 & 0.1811 & 0.2881 & 0.1709 & 0.2874 & 0.1447 & 0.2954 & 0.0738 & 0.3447 & 0.0163 \\ 
  500 & 80 & 0.1494 & 1.3173 & 0.1475 & 0.5217 & 0.1456 & 0.1263 & 0.1488 & 0.1162 & 0.1516 & 0.0937 & 0.1549 & 0.0433 & 0.1892 & 0.0103 \\ 
  500 & 100 & 0.0001 & 1.5910 & 0.0037 & 1.1025 & 0.0001 & 0.3695 & 0.0084 & 0.0763 & 0.0001 & 0.0107 & 0.0055 & 0.0103 & 0.0152 & 0.0029 \\ 
  500 & 120 & 0.0001 & 1.6261 & 0.0008 & 0.9753 & 0.0012 & 0.3498 & 0.0001 & 0.1260 & 0.0011 & 0.0926 & 0.0047 & 0.0043 & 0.0472 & 0.0053 \\ \midrule
  1000 & 20 & 0.5297 & 1.9010 & 0.5298 & 1.0170 & 0.5303 & 0.3600 & 0.5303 & 0.2885 & 0.5304 & 0.2438 & 0.5401 & 0.1049 & 0.5618 & 0.0193 \\ 
  1000 & 40 & 0.4185 & 1.3677 & 0.4184 & 0.6938 & 0.4207 & 0.2566 & 0.4180 & 0.2135 & 0.4205 & 0.1893 & 0.4282 & 0.0897 & 0.4720 & 0.0189 \\ 
  1000 & 60 & 0.2936 & 1.3226 & 0.2901 & 0.5876 & 0.2884 & 0.1742 & 0.2864 & 0.1607 & 0.2913 & 0.1435 & 0.3083 & 0.0738 & 0.3501 & 0.0149 \\ 
  1000 & 80 & 0.1478 & 1.3750 & 0.1490 & 0.5260 & 0.1586 & 0.1143 & 0.1538 & 0.1095 & 0.1523 & 0.0836 & 0.1589 & 0.0457 & 0.1886 & 0.0131 \\ 
  1000 & 100 & 0.0032 & 1.7192 & 0.0006 & 1.0667 & 0.0001 & 0.4316 & 0.0039 & 0.0522 & 0.0037 & 0.0371 & 0.0041 & 0.0092 & 0.0146 & 0.0048 \\ \midrule
  1000 & 120 & 0.0013 & 1.4058 & 0.0002 & 1.0141 & 0.0002 & 0.4171 & 0.0001 & 0.1026 & 0.0001 & 0.0349 & 0.0085 & 0.0068 & 0.0373 & 0.0068 \\ 
  2000 & 20 & 0.5156 & 2.1175 & 0.5151 & 1.0217 & 0.5107 & 0.2484 & 0.5109 & 0.2033 & 0.5109 & 0.1837 & 0.5165 & 0.0868 & 0.5396 & 0.0141 \\ 
  2000 & 40 & 0.3804 & 1.4450 & 0.3825 & 0.5931 & 0.3825 & 0.2128 & 0.3823 & 0.1950 & 0.3812 & 0.1697 & 0.3962 & 0.0760 & 0.4535 & 0.0119 \\ 
  2000 & 60 & 0.2482 & 1.1730 & 0.2503 & 0.4938 & 0.2484 & 0.1909 & 0.2489 & 0.1711 & 0.2534 & 0.1364 & 0.2807 & 0.0604 & 0.3350 & 0.0098 \\ 
  2000 & 80 & 0.1224 & 1.1712 & 0.1232 & 0.4072 & 0.1281 & 0.1154 & 0.1268 & 0.0915 & 0.1314 & 0.0708 & 0.1316 & 0.0378 & 0.1596 & 0.0062 \\ 
  2000 & 100 & 0.0084 & 1.5782 & 0.0001 & 0.9839 & 0.0023 & 0.3200 & 0.0086 & 0.0759 & 0.0001 & 0.0118 & 0.0210 & 0.0173 & 0.0132 & 0.0064 \\ 
  2000 & 120 & 0.0003 & 1.3717 & 0.0010 & 0.4723 & 0.0030 & 0.2159 & 0.0017 & 0.1315 & 0.0134 & 0.0733 & 0.0186 & 0.0143 & 0.0276 & 0.0042 \\ \midrule
  5000 & 20 & 0.5229 & 1.8672 & 0.5237 & 1.0900 & 0.5238 & 0.3681 & 0.5233 & 0.3043 & 0.5248 & 0.2469 & 0.5330 & 0.1058 & 0.5503 & 0.0181 \\ 
  5000 & 40 & 0.4077 & 1.3445 & 0.4006 & 0.7257 & 0.4088 & 0.2781 & 0.4014 & 0.2665 & 0.3990 & 0.2094 & 0.4139 & 0.1020 & 0.4603 & 0.0187 \\ 
  5000 & 60 & 0.2730 & 1.4020 & 0.2732 & 0.6155 & 0.2804 & 0.1877 & 0.2818 & 0.1823 & 0.2812 & 0.1412 & 0.2963 & 0.0805 & 0.3352 & 0.0152 \\ 
  5000 & 80 & 0.1459 & 1.3550 & 0.1435 & 0.5232 & 0.1558 & 0.1121 & 0.1457 & 0.1057 & 0.1496 & 0.0836 & 0.1593 & 0.0491 & 0.1804 & 0.0146 \\ 
  5000 & 100 & 0.0001 & 1.8040 & 0.0001 & 1.0947 & 0.0006 & 0.3304 & 0.0006 & 0.0319 & 0.0044 & 0.0399 & 0.0205 & 0.0158 & 0.0401 & 0.0060 \\ 
  5000 & 120 & 0.0008 & 1.6317 & 0.0004 & 0.8336 & 0.0007 & 0.3286 & 0.0082 & 0.1408 & 0.0090 & 0.0649 & 0.0309 & 0.0167 & 0.0369 & 0.0051 \\ 
\bottomrule
\end{tabular}
\end{sidewaystable}
}

Numerical results obtained by FPM and ADAM on noisy synthetic data are presented in \Cref{tab:noisy_synthetic_unconstrained} and \Cref{tab:noisy_synthetic_orthogonal}. For all computations in this experiment, we use $\alpha = 100$.
The noisy experiments show the same qualitative behaviour as the noiseless ones. As $k_{rel}$ increases, the $\MSE$ decreases, and for $k_{rel}\ge 100\%$ the residuals are very small. For $k_{rel}<100\%$, the error caused by using too small  inner dimension dominates the effect of the added noise, so the results change only mildly with respect to $\xi$. Without enforcing the orthogonality constraint, FPM and ADAM are almost indistinguishable, with ADAM being slightly better in most cases. With the orthogonality constraint, FPM gives smaller $\MSE$ for most under-parametrized cases, while ADAM remains very accurate around the true dimension $k_{rel}=100\%$. Hence, adding noise does not change the main conclusion from the noiseless experiment: both algorithms are stable, and neither one dominates the other in all regimes.

\afterpage{%
\begin{table}[p]
\centering
\caption{Solving \eqref{eqn:SONMTF} without orthogonality constraint with FPM and ADAM on noisy synthetic data ($n=500$), using $\alpha=100$.}\label{tab:noisy_synthetic_unconstrained}
\begin{tabular}{rr|rrr|rrr}
& & & FPM & & &ADAM& \\
\toprule
$k$ & $k_{rel}$ & $\xi=10^{-6}$ & $\xi=10^{-4}$ & $\xi=10^{-2}$ & $\xi=10^{-6}$ & $\xi=10^{-4}$ & $\xi=10^{-2}$ \\
\midrule
$10$ & $20 $&$ 0.5074 $&$ 0.5034 $&$ 0.4643$ & $0.5074$ & $0.5311$ & $0.4643$ \\
$10$ & $40 $&$ 0.3776 $&$ 0.3743 $&$ 0.3439$ & $0.3620$ & $0.3589$ & $0.3305$ \\
$10$ & $60 $&$ 0.2343 $&$ 0.2317 $&$ 0.2121$ & $0.2379$ & $0.2319$ & $0.2337$ \\
$10$ & $80 $&$ 0.1109 $&$ 0.0979 $&$ 0.0902$ & $0.0984$ & $0.0974$ & $0.0895$ \\
$10$& $100 $&$ 0.0001 $&$ 0.0001 $&$ 0.0019$ & $0.0000$ & $0.0000$ & $0.0019$ \\
$10$& $120 $&$ 0.0001 $&$ 0.0001 $&$ 0.0019$ & $0.0000$ & $0.0000$ & $0.0013$ \\\midrule
$20$ & $20 $&$ 0.5163 $&$ 0.5024 $&$ 0.4716$ & $0.5167$ & $0.5078$ & $0.4689$ \\
$20$ & $40 $&$ 0.3907 $&$ 0.3890 $&$ 0.3602$ & $0.3889$ & $0.3894$ & $0.3542$ \\
$20$ & $60 $&$ 0.2739 $&$ 0.2592 $&$ 0.2433$ & $0.2628$ & $0.2604$ & $0.2400$ \\
$20$ & $80 $&$ 0.1475 $&$ 0.1249 $&$ 0.1309$ & $0.1366$ & $0.1310$ & $0.1144$ \\
$20$& $100 $&$ 0.0001 $&$ 0.0001 $&$ 0.0018$ & $0.0000$ & $0.0000$ & $0.0018$ \\
$20$& $120 $&$ 0.0001 $&$ 0.0001 $&$ 0.0013$ & $0.0000$ & $0.0000$ & $0.0013$ \\\midrule
$30$ & $20 $&$ 0.5017 $&$ 0.5116 $&$ 0.4575$ & $0.5062$ & $0.4946$ & $0.4577$ \\
$30$ & $40 $&$ 0.3945 $&$ 0.3949 $&$ 0.3667$ & $0.3936$ & $0.3863$ & $0.3541$ \\
$30$ & $60 $&$ 0.2771 $&$ 0.2776 $&$ 0.2591$ & $0.2777$ & $0.2700$ & $0.2475$ \\
$30$ & $80 $&$ 0.1443 $&$ 0.1378 $&$ 0.1310$ & $0.1474$ & $0.1407$ & $0.1363$ \\
$30$& $100 $&$ 0.0001 $&$ 0.0257 $&$ 0.0017$ & $0.0000$ & $0.0000$ & $0.0017$ \\
$30$& $120 $&$ 0.0001 $&$ 0.0001 $&$ 0.0013$ & $0.0000$ & $0.0000$ & $0.0013$ \\\midrule
$40$ & $20 $&$ 0.5058 $&$ 0.5028 $&$ 0.4648$ & $0.5017$ & $0.5036$ & $0.4576$ \\
$40$ & $40 $&$ 0.3964 $&$ 0.3981 $&$ 0.3599$ & $0.3945$ & $0.3908$ & $0.3591$ \\
$40$ & $60 $&$ 0.2810 $&$ 0.2785 $&$ 0.2529$ & $0.2720$ & $0.2725$ & $0.2473$ \\
$40$ & $80 $&$ 0.1500 $&$ 0.1434 $&$ 0.1355$ & $0.1432$ & $0.1416$ & $0.1317$ \\
$40$& $100 $&$ 0.0001 $&$ 0.0001 $&$ 0.0018$ & $0.0000$ & $0.0000$ & $0.0017$ \\
$40$& $120 $&$ 0.0001 $&$ 0.0001 $&$ 0.0013$ & $0.0000$ & $0.0000$ & $0.0013$ \\\midrule
$50$ & $20 $&$ 0.5265 $&$ 0.5251 $&$ 0.4854$ & $0.5237$ & $0.5190$ & $0.4791$ \\
$50$ & $40 $&$ 0.4305 $&$ 0.4187 $&$ 0.3910$ & $0.4185$ & $0.4173$ & $0.3817$ \\
$50$ & $60 $&$ 0.3036 $&$ 0.3000 $&$ 0.2813$ & $0.3002$ & $0.3003$ & $0.2728$ \\
$50$ & $80 $&$ 0.1554 $&$ 0.1647 $&$ 0.1452$ & $0.1565$ & $0.1545$ & $0.1416$ \\
$50$& $100 $&$ 0.0029 $&$ 0.0001 $&$ 0.0017$ & $0.0000$ & $0.0000$ & $0.0017$ \\
$50$& $120 $&$ 0.0001 $&$ 0.0001 $&$ 0.0013$ & $0.0000$ & $0.0000$ & $0.0012$ \\
\bottomrule
\end{tabular}
\end{table}
}

\afterpage{%
\clearpage
\begin{table}[p]
\centering
\caption{Solving \eqref{eqn:SONMTF} FPM and ADAM on noisy synthetic data ($n=500$), using $\alpha=100$.}
\begin{tabular}{rr|rrr|rrr}
& & & FPM & & &ADAM& \\
\toprule
$k$ & $k_{rel}$ & $\xi=10^{-6}$ & $\xi=10^{-4}$ & $\xi=10^{-2}$ & $\xi=10^{-6}$ & $\xi=10^{-4}$ & $\xi=10^{-2}$ \\
\midrule
$10$ & $20$ &$0.5081 $& $0.5311$ &$ 0.4903$ & $0.5232$ & $0.5410$ & $0.4789$ \\
$10$ & $40$ &$0.3830 $& $0.3799$ &$ 0.3349$ & $0.4013$ & $0.3982$ & $0.3690$ \\
$10$ & $60$ &$0.2497 $& $0.2473$ &$ 0.2264$ & $0.2742$ & $0.2699$ & $0.2887$ \\
$10$ & $80$ &$0.1124 $& $0.1169$ &$ 0.1076$ & $0.1228$ & $0.1219$ & $0.1154$ \\
$10$ &$100$ &$0.0001 $& $0.0001$ &$ 0.0019$ & $0.0000$ & $0.0000$ & $0.0033$ \\
$10$ &$120$ &$0.0001 $& $0.0001$ &$ 0.0019$ & $0.0000$ & $0.0000$ & $0.0031$ \\\midrule
$20$ & $20$ &$0.5190 $& $0.5136$ &$ 0.4749$ & $0.5475$ & $0.5491$ & $0.4950$ \\
$20$ & $40$ &$0.4030 $& $0.3916$ &$ 0.3574$ & $0.4330$ & $0.4294$ & $0.4043$ \\
$20$ & $60$ &$0.2647 $& $0.2624$ &$ 0.2452$ & $0.3053$ & $0.3030$ & $0.2836$ \\
$20$ & $80$ &$0.1481 $& $0.1415$ &$ 0.1299$ & $0.1607$ & $0.1553$ & $0.1396$ \\
$20$ &$100$ &$0.0001 $& $0.0329$ &$ 0.0312$ & $0.0000$ & $0.0000$ & $0.0033$ \\
$20$ &$120$ &$0.0001 $& $0.0001$ &$ 0.0016$ & $0.0000$ & $0.0000$ & $0.0031$ \\\midrule
$30$ & $20$ &$0.5152 $& $0.5108$ &$ 0.4640$ & $0.5450$ & $0.5315$ & $0.5003$ \\
$30$ & $40$ &$0.4004 $& $0.3888$ &$ 0.3583$ & $0.4428$ & $0.4307$ & $0.3912$ \\
$30$ & $60$ &$0.2833 $& $0.2807$ &$ 0.2518$ & $0.3162$ & $0.3051$ & $0.2888$ \\
$30$ & $80$ &$0.1468 $& $0.1500$ &$ 0.1339$ & $0.1643$ & $0.1611$ & $0.1566$ \\
$30$ &$100$ &$0.0001 $& $0.0330$ &$ 0.0258$ & $0.0000$ & $0.0000$ & $0.0033$ \\
$30$ &$120$ &$0.0001 $& $0.0001$ &$ 0.0016$ & $0.0000$ & $0.0000$ & $0.0030$ \\\midrule
$40$ & $20$ &$0.5080 $& $0.5058$ &$ 0.4700$ & $0.5444$ & $0.5483$ & $0.5034$ \\
$40$ & $40$ &$0.4015 $& $0.3929$ &$ 0.3651$ & $0.4488$ & $0.4406$ & $0.4135$ \\
$40$ & $60$ &$0.2843 $& $0.2757$ &$ 0.2533$ & $0.3119$ & $0.3087$ & $0.2913$ \\
$40$ & $80$ &$0.1474 $& $0.1443$ &$ 0.1333$ & $0.1617$ & $0.1620$ & $0.1556$ \\
$40$ &$100$ &$0.0001 $& $0.0001$ &$ 0.0179$ & $0.0000$ & $0.0000$ & $0.0033$ \\
$40$ &$120$ &$0.0001 $& $0.0001$ &$ 0.0015$ & $0.0000$ & $0.0000$ & $0.0030$ \\\midrule
$50$ & $20$ &$0.5299 $& $0.5263$ &$ 0.4856$ & $0.5704$ & $0.5649$ & $0.5298$ \\
$50$ & $40$ &$0.4322 $& $0.4264$ &$ 0.3922$ & $0.4674$ & $0.4625$ & $0.4263$ \\
$50$ & $60$ &$0.3026 $& $0.3005$ &$ 0.2799$ & $0.3333$ & $0.3344$ & $0.3097$ \\
$50$ & $80$ &$0.1619 $& $0.1552$ &$ 0.1435$ & $0.1726$ & $0.1712$ & $0.1629$ \\
$50$ &$100$ &$0.0001 $& $0.0001$ &$ 0.0157$ & $0.0000$ & $0.0000$ & $0.0032$ \\
$50$ &$120$ &$0.0001 $& $0.0001$ &$ 0.0017$ & $0.0000$ & $0.0000$ & $0.0030$ \\
\bottomrule
\end{tabular}
\label{tab:noisy_synthetic_orthogonal}
\end{table}%
\clearpage
}

\section{Numerical experiments on real data}
\label{sec:real-data-exp}

In this section, we present the results of a series of experiments demonstrating the performance of the proposed method using real data. Specifically, the goal of the experiments is to evaluate whether the orthogonal non-negative latent representations learned by SONMTF improve downstream graph mining tasks. We evaluate the method on three common graph mining tasks: (i) link prediction, (ii) node clustering, and (iii) node classification. \Cref{subsec:real-data} introduces the data sets used in this part of the study. \Cref{subsec:exp-setup} describes the experimental design settings applied to all three evaluation tasks. Finally, \Crefrange{subsec:lpr-task}{subsec:ncla-task} detail each machine learning method and present the experimental results.

\subsection{Datasets}%
\label{subsec:real-data}

To evaluate the proposed SONMTF approach on real-world graph mining tasks, we used three widely adopted citation network benchmarks: Cora, CiteSeer, and PubMed \citep{yang2016revisiting}. These datasets differ substantially in size, density, and number of classes, providing a diverse test bed for assessing the robustness and generalizability of the learned node representations. Moreover, citation networks naturally exhibit community structure and homophily, making them particularly suitable for evaluating matrix factorization methods designed to uncover latent relational patterns.

\Cref{tab:bench_net_props} summarizes the main structural characteristics of the benchmark networks. The \emph{Cora} citation network consists of 2708 scientific publications connected by 5278 citation links. Each publication belongs to one of seven research topics, making the dataset suitable for evaluating representation learning methods in node classification and clustering tasks. The \emph{CiteSeer} dataset contains 3327 scientific publications connected through 4676 citation relationships and annotated with six topic categories. Compared with Cora, the network is sparser and therefore provides a more challenging benchmark for representation learning methods. The \emph{PubMed} citation network was extracted from a collection of scientific publications related to diabetes research. It is the largest dataset considered in this study, comprising 19{,}717 nodes and 44{,}327 citation links. Each publication is assigned to one of three classes.

\begin{table}[h]
\centering
\caption{Structural characteristics of the benchmark citation networks}
\begin{tabular}{l*2{S[table-format=5.0]}S[table-format=1.2]c}
\toprule
Dataset & {Nodes} & {Edges} & {Degree} & {Classes} \\
\midrule
Cora & 2708 & 5278 & 3.89 & 7 \\
CiteSeer & 3327 & 4676 & 2.81 & 6 \\
PubMed  & 19717 & 44327 & 4.49 & 3 \\
\bottomrule        
\end{tabular}
\label{tab:bench_net_props}
\end{table}

\subsection{Experimental setup}%
\label{subsec:exp-setup}

To apply a machine learning algorithm to a graph, we must first generate suitable features representing each node. These features are usually extracted manually based on the topology of the graph, such as degree centrality or path length. However, manually creating features is time-consuming and complicated for modern, complex datasets. Fortunately, many embedding approaches have been developed to enable automatic feature extraction from graph data. The basic idea behind an embedding algorithm is to learn a mapping function that embeds each node in a low-dimensional $\mathbb{R}^d$ space as a feature vector by optimizing an objective function to represent the structure of the input network.

From an algorithmic perspective, network embedding approaches can be divided into three groups: matrix factorization-based, random walk-based, and deep learning-based methods. For a comprehensive overview, we refer interested readers to the recent surveys \citep[e.g.,][]{zhou2022network}. Following this taxonomy, we conduct the present experiment in two steps:
\begin{enumerate}
    \item First we apply the proposed non-negative matrix factorization approach to map a given network (i.e., we solve \eqref{eqn:SONMTF} for a single input adjacency matrix $A$) to get an embedding represented by the matrix $G$ with $N$ rows and $P$ columns. We set $P = 128$, following standard practice in graph embedding benchmarks~\citep{grover2016node}. In the second step, we run a selected machine learning algorithm on a matrix $X$.
    \item We adopt two baseline approaches that are commonly used in graph embedding applications, namely SVD and node2vec algorithm.
\end{enumerate}

The well-known SVD is an example of matrix factorization methods. For SVD, we used a truncated version of the algorithm (tSVD) because it is computationally more efficient, especially for sparse matrices. Additionally, tSVD can produce a resulting matrix with $d \ll P$ columns, while standard SVD is limited to $P$ features~\citep{halko2011finding}. Node2vec, in contrast, uses a random walk to capture the structure of a network and then learns node embeddings by training a skip-gram model~\citep{grover2016node}.

\subsection{Link prediction}
\label{subsec:lpr-task}

(General) Link prediction (LP) is a task in modern data science that aims to predict whether a connection between a pair of nodes will occur \citep{liben2007link}. The intuition behind LP is that more similar nodes are more likely to be connected. For a detailed introduction to link prediction, see survey by \citet{kumar2020link}.

(Details) The evaluation approach for LP follows the procedure described in our previous work \citep{kastrin2016link}. In brief, we randomly removed 30\% of the links while keeping the network connected. This step provides the training network, to which we apply two groups of feature construction methods:
\begin{enumerate}
    \item Representation learning: We apply methods designed for matrix factorization (SNMTF, SONMTF, SVD) and random walks (node2vec) to the training network to generate $d$-dimensional latent vectors for each node. To convert node-level representations to edge-level features for the classifier, we compute the Hadamard product of the two node vectors, producing a single element-wise multiplied link vector representing the edge.
    \item Topology-based heuristics: We calculate traditional link prediction heuristic scores that measure structural similarity directly from the training graph. These include neighborhood-overlap metrics like common neighbors (CN), Jaccard coefficient (JC), and Adamic/Adar index (AA).
\end{enumerate}

The resulting link vectors and raw heuristic scores are fed into a logistic regression binary classifier. The prediction performance is evaluated on the hidden test edges alongside an equivalent set of randomly sampled non-existent edges, measured via the Area Under the Receiver Operating Characteristic (AUROC) metric.

The results in \Cref{tab:link-prediction} show that representation learning approaches outperform traditional topology-based heuristics across all evaluated citation networks. Notably, the proposed SONMTF formulations consistently achieve performance comparable to or exceeding that of both tSVD and node2vec. On the Cora and CiteSeer datasets, the non-orthogonal and orthogonal variants attain the highest predictive accuracy, with the non-orthogonal model performing best on Cora (0.789) and the orthogonal model achieving the top score on CiteSeer (0.792). On the larger PubMed dataset, the non-orthogonal variant again yields the strongest performance (0.887). These results indicate that combining non-negativity constraints with the factorization of shared structural patterns enables more accurate modeling of link relationships than unconstrained linear decompositions or random walk–based embeddings.

\begin{table}[h]
\sisetup{separate-uncertainty=true}
\centering
\caption{Link prediction results (AUROC) on the benchmark citation networks}
\begin{tabular}{l*3{S[table-format=1.3(3)]}}
\toprule
Method & {Cora} & {CiteSeer} & {PubMed} \\
\midrule
CN & 0.639 \pm 0.009 & 0.603 \pm 0.020 & 0.580 \pm 0.033 \\
JC & 0.638 \pm 0.031 & 0.603 \pm 0.041 & 0.580 \pm 0.035 \\
AA & 0.639 \pm 0.043 & 0.603 \pm 0.038 & 0.580 \pm 0.021 \\
\midrule
tSVD & 0.770 \pm 0.005 & 0.770 \pm 0.025 & 0.836 \pm 0.049 \\
node2vec & 0.687 \pm 0.021 & 0.747 \pm 0.021 & 0.852 \pm 0.010 \\
SNMTF & 0.789 \pm 0.002 & 0.790 \pm 0.043 & 0.887 \pm 0.034 \\
SONMTF & 0.782 \pm 0.037 & 0.792 \pm 0.020 & 0.868 \pm 0.006 \\
\bottomrule
\end{tabular}
\footnotetext{\textit{Legend}: CN, common neighbors; JC, Jaccard coefficient; AA, Adamic/Adar index.}
\footnotetext{\textit{Note}: Results are presented as mean AUROC\,$\pm$\,SD over 10 runs. Larger values imply better performance.}
\label{tab:link-prediction}
\end{table}

\subsection{Node clustering}
\label{subsec:ncu-task}

(General) Node clustering (NC) is an unsupervised task aimed at partitioning nodes into multiple homogeneous clusters. In NC we first learn the node representation vectors of a graph, and then perform unsupervised partitioning to get the final clustering solution. The objective is to ensure that nodes within the same cluster (community) are more densely connected to each other than to nodes in other clusters.

(Details) The evaluation pipeline for NC is executed through the following steps:
\begin{enumerate}
\item Representation learning: Similar to the LP task, we apply matrix factorization (tSVD) and random-walk-based method (node2vec) to the network to map nodes into a continuous vector space. NC operates directly on node-level embeddings.
\item Node partitioning: We apply the unsupervised $k$-means clustering algorithm to the resulting embeddings.
\item Cluster assessment: We evaluate the quality of the resulting clusters using the silhouette metric \citep{rousseeuw1987silhouettes}.
\end{enumerate}

The pipeline was run 10 times using random seed initializations. The individual silhouette widths from these trials were averaged, with the resulting mean values reported as the final performance scores in \Cref{tab:node-clustering}.

\begin{sidewaystable}[p]
\thisfloatpagestyle{empty}
\sisetup{separate-uncertainty=true}
\centering
\caption{Node clustering results (silhouette width) for different embedding methods and numbers of clusters.}
\label{tab:node-clustering}
\begin{tabular}{l*9{S[table-format=1.3(3)]}}
\toprule
& \multicolumn{9}{c}{Cora} \\
\cmidrule{2-10}
Method/$k$ &   {2} &   {3} &   {4} &   {5} &   {6} &   {7} &   {8} &   {9} & {10} \\
\cmidrule{1-10}
tSVD                   & 0.506 \pm 0.004 & 0.485 \pm 0.006 & 0.503 \pm 0.005 & 0.505 \pm 0.007 & 0.521 \pm 0.003 & 0.527 \pm 0.004 & 0.535 \pm 0.006 & 0.540 \pm 0.005 & 0.538 \pm 0.006 \\
node2vec              & 0.138 \pm 0.012 & 0.077 \pm 0.015 & 0.094 \pm 0.011 & 0.104 \pm 0.009 & 0.105 \pm 0.014 & 0.108 \pm 0.008 & 0.107 \pm 0.010 & 0.115 \pm 0.013 & 0.119 \pm 0.011 \\
SNMTF    & 0.515 \pm 0.003 & 0.541 \pm 0.004 & 0.543 \pm 0.005 & 0.564 \pm 0.003 & 0.577 \pm 0.004 & 0.572 \pm 0.006 & 0.581 \pm 0.005 & 0.593 \pm 0.002 & 0.582 \pm 0.004 \\
SONMTF       & 0.978 \pm 0.001 & 0.760 \pm 0.006 & 0.689 \pm 0.008 & 0.473 \pm 0.011 & 0.498 \pm 0.009 & 0.508 \pm 0.010 & 0.534 \pm 0.007 & 0.546 \pm 0.006 & 0.554 \pm 0.008 \\
\midrule
& \multicolumn{9}{c}{CiteSeer} \\
\cmidrule{2-10}
Method/$k$ &   {2} &   {3} &   {4} &   {5} &   {6} &   {7} &   {8} &   {9} & {10} \\
\cmidrule{1-10}
tSVD                   & 0.418 \pm 0.005 & 0.418 \pm 0.007 & 0.420 \pm 0.004 & 0.419 \pm 0.006 & 0.394 \pm 0.009 & 0.394 \pm 0.006 & 0.386 \pm 0.008 & 0.384 \pm 0.011 & 0.388 \pm 0.005 \\
node2vec              & 0.249 \pm 0.014 & 0.287 \pm 0.011 & 0.247 \pm 0.015 & 0.252 \pm 0.010 & 0.254 \pm 0.009 & 0.275 \pm 0.012 & 0.287 \pm 0.013 & 0.287 \pm 0.011 & 0.241 \pm 0.014 \\
SNMTF               & 0.418 \pm 0.003 & 0.399 \pm 0.004 & 0.399 \pm 0.006 & 0.400 \pm 0.004 & 0.386 \pm 0.005 & 0.372 \pm 0.007 & 0.395 \pm 0.004 & 0.370 \pm 0.006 & 0.361 \pm 0.008 \\
SONMTF                  & 0.423 \pm 0.004 & 0.419 \pm 0.003 & 0.415 \pm 0.005 & 0.388 \pm 0.008 & 0.411 \pm 0.006 & 0.409 \pm 0.007 & 0.381 \pm 0.007 & 0.373 \pm 0.009 & 0.375 \pm 0.006 \\
\midrule
& \multicolumn{9}{c}{PubMed} \\
\cmidrule{2-10}
Method/$k$ &   {2} &   {3} &   {4} &   {5} &   {6} &   {7} &   {8} &   {9} & {10} \\
\cmidrule{1-10}
tSVD                   & 0.443 \pm 0.006 & 0.332 \pm 0.008 & 0.332 \pm 0.007 & 0.352 \pm 0.006 & 0.330 \pm 0.008 & 0.312 \pm 0.009 & 0.337 \pm 0.005 & 0.310 \pm 0.007 & 0.299 \pm 0.010 \\
node2vec              & 0.270 \pm 0.011 & 0.279 \pm 0.009 & 0.270 \pm 0.012 & 0.107 \pm 0.015 & 0.184 \pm 0.013 & 0.104 \pm 0.014 & 0.110 \pm 0.012 & 0.147 \pm 0.011 & 0.125 \pm 0.013 \\
SNMTF               & 0.498 \pm 0.002 & 0.402 \pm 0.004 & 0.402 \pm 0.003 & 0.440 \pm 0.005 & 0.314 \pm 0.006 & 0.353 \pm 0.004 & 0.345 \pm 0.005 & 0.358 \pm 0.004 & 0.358 \pm 0.005 \\
SONMTF                  & 0.499 \pm 0.002 & 0.404 \pm 0.003 & 0.405 \pm 0.004 & 0.443 \pm 0.004 & 0.351 \pm 0.005 & 0.358 \pm 0.005 & 0.350 \pm 0.006 & 0.364 \pm 0.005 & 0.364 \pm 0.004 \\
\bottomrule
\end{tabular}
\footnotetext{\textit{Note}: Results are presented as mean silhouette width\,$\pm$\,SD over 10 runs. Larger values imply better performance.}
\end{sidewaystable}

The clustering results detailed in \Cref{tab:node-clustering} reveal distinct behavior patterns among the embedding methods. For the Cora dataset, the orthogonal SONMTF model achieves a high silhouette score at lower cluster bounds (0.978 for $k = 2$ and 0.760 for $k = 3$), confirming that imposing explicit orthogonality constraints transforms the continuous latent space into a sparse, near-discrete representation that aligns exceptionally well with low-cardinality global community structures. As the number of clusters increases, the non-orthogonal variant tends to capture finer, overlapping sub-structures more effectively, matching the behavior seen in PubMed dataset where both SONMTF methods outperform tSVD and node2vec across nearly all values of $k$. Across all datasets, node2vec yields significantly lower silhouette widths, indicating that its continuous random walk spaces lack the explicit geometric boundaries naturally induced by NMTF.

\subsection{Node classification}
\label{subsec:ncla-task}

(General) Node classification (NC) is a supervised learning task whose objective is to predict the unobserved labels (i.e., classes) of nodes based on their features and the underlying network structure. This task leverages known ground-truth labels and structural homophily, operating under the assumption that interconnected nodes, or nodes with high structural similarity, are likely to share the same class labels.

(Details) The node classification pipeline follows a standard supervised learning protocol:
\begin{enumerate}
  \item Representation learning: Node-level latent features are extracted using tSVD, node2vec, and the proposed non-orthogonal (SNMFT) and orthogonal (SONMTF) models, mapping the underlying network graph into a continuous $d$-dimensional representation space.
  \item Model training and evaluation: The network nodes are partitioned into a training set containing known class labels and a held-out test set. Logistic regression is then trained on the learned embeddings, and the AUROC is evaluated against the ground-truth labels.
\end{enumerate}

\begin{table}[htb]
\sisetup{separate-uncertainty=true}
\centering
\caption{Node classification results (AUROC) on the benchmark citation networks.}
\begin{tabular}{l*3{S[table-format=1.3(3)]}}
\toprule
Method & {Cora} & {CiteSeer} & {PubMed} \\
\midrule
tSVD & 0.572 \pm 0.029 & 0.832 \pm 0.020 & 0.894 \pm 0.032 \\
node2vec & 0.572 \pm 0.018 & 0.816 \pm 0.058 & 0.776 \pm 0.064 \\
SNMTF & 0.610 \pm 0.002 & 0.848 \pm 0.003 & 0.859 \pm 0.016 \\
SONMTF & 0.626 \pm 0.026 & 0.844 \pm 0.024 & 0.915 \pm 0.032 \\
\bottomrule
\end{tabular}
\footnotetext{\textit{Note}: Results are presented as mean AUROC\,$\pm$\,SD over 10 runs. Larger values imply better performance.}
\label{tab:node-classification}
\end{table}

The supervised classification performance summarized in \Cref{tab:node-classification} reinforces the descriptive power of the features generated by our proposed models. The orthogonal SONMTF framework achieves the highest AUROC on both the Cora (0.626) and PubMed (0.915) datasets. The PubMed result is particularly notable, as the orthogonal variant beats the non-orthogonal model by a wide margin and outpaces the strong tSVD baseline (0.894), highlighting that the discrete assignment-like constraints within the common factor $G$ act as an effective regularizer that prevents overfitting during supervised downstream classification. For CiteSeer, the non-orthogonal variant achieves the top performance (0.848), demonstrating that even without strict geometric enforcement, the tri-factorization layer successfully distills robust semantic features from relational data.

Overall, orthogonality encourages more discriminative latent representations and sharper cluster assignment. Across all tasks and datasets, SONMTF ranks among the strongest methods. This is particularly evident for node classification task, where SONMTF achieves the strongest performance on two of the three benchmark datasets.

\section{Concluding remarks}

In this paper we studied the symmetric multi-type orthogonal non-negative matrix tri-factorization problem \eqref{eqn:SONMTF}. The model is motivated by applications in which several symmetric relation matrices over the same set of objects have to be approximated simultaneously, while the common factor $G$ provides a low-dimensional and interpretable representation of the objects. The non-negativity constraint supports interpretability, whereas the orthogonality constraint gives the factor $G$ an assignment-like structure, which is particularly natural in clustering and network analysis.

We proposed two heuristic approaches for solving \eqref{eqn:SONMTF}. The first one is a fixed point method derived from the KKT conditions after adding a penalty term for the orthogonality constraint. This leads to simple multiplicative update rules and gives a direct way to balance reconstruction accuracy and orthogonality through the parameter $\alpha$. The second one is a three-stage ADAM-based method. Since ADAM cannot be applied directly to the non-negative orthogonal feasible set, the proposed algorithm first solves a non-negative transformed problem, then orthogonalizes the obtained solution, and finally refines it by restricted ADAM on the selected feasible component.

The experiments on synthetic data show that both approaches are able to recover high-quality factorizations. When the prescribed inner dimension is at least as large as the true generating dimension, both methods obtain solutions with almost zero reconstruction error. For smaller inner dimensions, the error increases as expected, but the behaviour remains stable and consistent. The experiments also illustrate the trade-off induced by the penalty parameter $\alpha$: larger values improve orthogonality, while too large values may deteriorate the reconstruction error. Among the tested values, $\alpha=100$ gave the best compromise. The noisy synthetic experiments further indicate that both algorithms are robust with respect to moderate perturbations of the input matrices.

The real-data experiments on citation networks confirm that the proposed factorization-based embeddings are useful for standard graph mining tasks. In link prediction, the proposed methods outperform the considered topology-based heuristics and are also competitive with, or better than, SVD and node2vec. In node classification, the embeddings obtained with the orthogonality constraint give the best results on Cora and PubMed, while the non-orthogonal variant is slightly better on CiteSeer. The clustering results are more mixed, but they still show that the proposed approach is competitive and that the usefulness of the orthogonality constraint depends on the data set and on the prescribed embedding dimension.

Overall, the results suggest that symmetric multi-type orthogonal non-negative matrix tri-factorization is a viable model for learning interpretable node representations from multi-relational symmetric data. The fixed point method is simple and transparent, while the ADAM-based approach provides a flexible alternative that can exploit modern gradient-based optimization tools. Neither method dominates the other in all settings, which is consistent with the non-convex nature of the problem, but both produce useful solutions in synthetic and real-data experiments.

Several directions remain open for future work. A more detailed convergence analysis of the proposed algorithms would be valuable, especially for the fixed point method with the orthogonality penalty and for the restricted ADAM refinement step. Another important issue is the automatic selection of the inner dimension $k$ and of the penalty parameter $\alpha$. From the application point of view, it would be useful to test the methods on larger and more heterogeneous multi-layer networks, and to extend the model to directed, weighted, or partially observed relation matrices. Finally, more scalable implementations could make the approach applicable to substantially larger networks.

\section*{Declarations}

\subsection*{Funding}

The authors acknowledge financial support from the Slovenian Research and Innovation Agency (ARIS) through the core research programmes Computer Structures and Systems (R.H. and G.P., Grant No. P2-0098) and Methodology for Data Analysis in Medical Sciences (A.K., Grant No. P3-0154), the research project Quantum Solver for Hard Binary Quadratic Problems (J.P., Grant No. J7-50186), and the annual work programme of Rudolfovo (J.P.).

\subsection*{Conflict of interest/Competing interests}
The authors have no financial or proprietary interests in any material discussed in this article.

\subsection*{Ethics approval and consent to participate}
Not applicable.

\subsection*{Consent for publication}
Not applicable.

\subsection*{Data availability}
The datasets used in this study are publicly available in the GitLab repository at \url{https://repo.ijs.si/hribarr/on-solving-symmetric-multi-type-orthogonal-nmtf-problem}.

\subsection*{Materials availability}
Not applicable.

\subsection*{Code availability} 
The source code used in this study is available in the GitLab repository at \url{https://repo.ijs.si/hribarr/on-solving-symmetric-multi-type-orthogonal-nmtf-problem}.

\subsection*{Author contribution}
All authors contributed to the study conception and design, material preparation, data collection, analysis, methodology development, interpretation of the results, and manuscript writing. All authors read and approved the final manuscript.

\bibliography{references}%
\end{document}